\newcommand{\NAME}{\texttt{DCFL}\xspace}
\title{Using Diffusion Models as Generative Replay in Continual Federated Learning -- What will Happen?}
\author{
  \vspace{-25pt}\\
  \textbf{Yongsheng Mei$^{1}$\thanks{Equal contribution} \quad Liangqi Yuan$^{2\,*}$} \\ \textbf{\quad Dong-Jun Han$^{3}$ \quad Kevin S. Chan$^{4}$ \quad Christopher G. Brinton$^{2}$ \quad Tian Lan$^{1}$}\vspace{3pt} \\
  $^1$The George Washington University ~~\quad\quad $^2$Purdue University \\ $^3$Yonsei University ~~\quad\quad $^4$Army Research Lab\vspace{3pt} \\
  \texttt{\small ysmei@email.gwu.edu, 
liangqiy@purdue.edu,} \\ 
  \texttt{\small 
djh@yonsei.ac.kr, kevin.s.chan.civ@army.mil, cgb@purdue.edu, 
tlan@gwu.edu}\vspace{8pt}  \\
  Code:~\, \url{https://github.com/ysmei97/DCFL}
  \vspace{-4pt}
}
\def\eqref#1{equation~(\ref{#1})}
\def\Eqref#1{Equation~(\ref{#1})}
\def\1{\bm{1}}
\def\vy{{\bm{y}}}
\def\vz{{\bm{z}}}
\def\mI{{\bm{I}}}
\DeclareMathAlphabet{\mathsfit}{\encodingdefault}{\sfdefault}{m}{sl}
\SetMathAlphabet{\mathsfit}{bold}{\encodingdefault}{\sfdefault}{bx}{n}
\def\gD{{\mathcal{D}}}
\def\gG{{\mathcal{G}}}
\def\gL{{\mathcal{L}}}
\def\gN{{\mathcal{N}}}
\def\gU{{\mathcal{U}}}
\def\gX{{\mathcal{X}}}
\newcommand{\E}{\mathbb{E}}
\newtheorem{theorem}{Theorem}
\newtheorem{lemma}{Lemma}
\newtheorem{assumption}{Assumption}
\begin{document}
	
    \maketitle

    \begin{abstract}
    Federated learning (FL) has become a cornerstone in decentralized learning, where, in many scenarios, the incoming data distribution will change dynamically over time, introducing continuous learning (CL) problems. This continual federated learning (CFL) task presents unique challenges, particularly regarding catastrophic forgetting and non-IID input data. Existing solutions include using a replay buffer to store historical data or leveraging generative adversarial networks. Nevertheless, motivated by recent advancements in the diffusion model for generative tasks, this paper introduces \NAME, a novel framework tailored to address the challenges of CFL in dynamic distributed learning environments. Our approach harnesses the power of the conditional diffusion model to generate synthetic historical data at each local device during communication, effectively mitigating latent shifts in dynamic data distribution inputs. We provide the convergence bound for the proposed CFL framework and demonstrate its promising performance across multiple datasets, showcasing its effectiveness in tackling the complexities of CFL tasks.
    \end{abstract}

    \section{Introduction}

Federated learning (FL) has emerged as a prevalent decentralized learning paradigm, allowing training of a global model through interactions with distributed clients while maintaining the privacy of their local data \cite{mcmahan2017communication,kairouz2021advances}. Most FL frameworks operate under the assumption that the client datasets at each client remain static throughout extensive learning cycles and iterations. However, this assumption does not align with the dynamic nature of real-world scenarios \cite{ding2022federated,chahoud2023demand}. The global model trained on such fixed datasets often fails to adapt effectively to the constantly evolving real world \cite{kim2021dynamic}. Furthermore, in real-world scenarios, clients often encounter new environments, objectives, and tasks -- an aspect of adaptability that conventional FL frameworks have not yet fully addressed.

Continual learning (CL) methods were proposed to handle the phenomenon of catastrophic forgetting, where historical data may become inaccessible due to privacy regulations or storage constraints \cite{wang2024comprehensive}. These methods were proposed that focus on developing systems capable of continuously learning from new tasks without erasing previously acquired knowledge. Classical CL scenarios can be broadly categorized into three types, ranging from task incremental learning (TIL) and domain incremental learning (DIL) to class incremental learning (CIL) \cite{van2022three}. However, these CL scenarios may face broader and more diverse challenges in the context of FL, as it is essential to consider cross-client scenarios and the non-IID data distribution among clients.

We aim to address the challenges of continual federated learning (CFL) tasks in practical settings. Specifically, in CFL, learning is decentralized across multiple heterogeneous devices and is coordinated by a central server, where devices encounter new data and tasks over time. This poses challenges in handling both catastrophic forgetting issues induced by timely-shifted data distribution and non-IID problems in FL \cite{yuan2023peer}. Recent approaches have been proposed to mitigate these challenges, such as leveraging replay memory to store historical data experienced by the model in the past \cite{yoon2021federated} and utilizing generative adversarial networks (GANs) to generate historical data on each device to help remember past experiences \cite{qi2022better}. Some methods utilize the global model on the server to train a generative model through knowledge distillation, but this approach leads to low-quality synthetic data and introduces additional noise \cite{babakniya2024data,zhang2023target}. While storing real historical data \cite{dong2022federated} is useful for memory replay, it may not be feasible in cases where the data is available only for limited-time usage. Alternatively, FOT \cite{bakman2023federated} performs global subspace extraction to identify features of previous tasks, aiming to prevent forgetting. However, FOT incurs higher communication costs between clients and servers due to the transfer of subspace information and orthogonal projections. GANs, as traditional generative models, on the other hand, involve learning two models (generator and discriminator) to reach a stable equilibrium, which can be difficult to train and sometimes susceptible to mode collapse \cite{srivastava2017veegan} problems. Therefore, the powerful data generation capability demonstrated by the diffusion model \cite{ho2020denoising} in various domains \cite{amit2021segdiff,austin2021structured,avrahami2022blended} makes it a strong candidate to replay data within the CFL context.

In this paper, we propose a novel framework \NAME that integrates CFL with conditional diffusion. At each local device, the embedded diffusion model serves to alleviate the impact of catastrophic forgetting by generating synthetic historical data. Since the diffusion model is not shared with anyone, our framework adheres to general privacy restrictions. Subsequently, the target models (i.e., models performing FL) are aggregated on the global server to obtain a generalized global model. We also provide a convergence analysis of \NAME by separately examining the convergence of the FL backbone, the data distribution shift, and the data generation convergence with the diffusion model. By combining these results, we demonstrate that the overall convergence of the system ultimately hinges on the performance of the introduced diffusion model, of which the bounded characteristic contributes to the system's convergence. \NAME has been tested on three CFL scenarios and four mainstream benchmark datasets, where \NAME significantly outperformed classical FL, classical CL, traditional generative model, and state-of-the-art (SOTA) baselines.

The main contributions of this paper are provided as follows:
\begin{itemize}
    \vspace{-.5em}
    \item We introduce a novel CFL framework, termed \NAME, which eliminates the need for replay memory, enabling model learning for both local clients and the global server with dynamic data inputs. \NAME leverages the modern diffusion model to generate synthetic historical data based on previously observed data distributions \textbf{(Section \ref{sec:CFL-CD})}.
    \vspace{-.1em}
    \item We provide the convergence analysis for our \NAME framework. Our convergence result captures the bound of the FL model, the bound affected by the diffusion model, and the effect of data distribution shift between time steps \textbf{(Section \ref{sec:proof})}. 
   \vspace{-.1em}
   \item  We conduct extensive experiments using MNIST, FashionMNIST, CIFAR-10, and PACS datasets under three practical CFL environments. The results demonstrate that our \NAME framework improves upon the best baseline by $32.61\%$ in the Class Incremental IID scenario, $15.16\%$ in the Class Incremental Non-IID scenario, and $7.45\%$ in the Domain Incremental scenario \textbf{(Section \ref{sec:experiments}).}
    \vspace{-.3em}
\end{itemize}
To the best of our knowledge, our \NAME is the first work that successfully integrates diffusion models into continual federated learning, addressing its unique challenges with theoretical analysis. 
    \section{Preliminary}
\vspace{-1mm}

\subsection{Continual Federated Learning Scenarios}
\label{sec:scenarios}
\vspace{-1mm}

Unlike classical CL setups, CFL tends toward greater diversity due to the presence of multiple clients. In the FL context, it requires (i) all clients to have the same model architecture, (ii) uniform consensus among clients (i.e., all clients agree on the definitions of labels), and (iii) the same task (i.e., the same global test set). Additionally, FL typically needs to address non-IID settings, where clients have different class distributions. Therefore, CFL introduces entirely distinct scenario setups, and so far there has been no unified approach in the literature.

We introduce three different CFL scenarios: 1) Class Incremental IID, 2) Class Incremental Non-IID, and 3) Domain Incremental settings, as shown in Figure \ref{Fig. FL_CL_Scenarios}. It is important to note that the IID setting here differs from the traditional IID setting, such as those considered in \cite{mcmahan2017communication}. In conventional FL, the IID setting is modeled by distributing the whole training set uniformly at random across all clients, without considering dataset evolution over time. In contrast, our IID setting in the CFL setting ensures that all clients have identical class distribution at any given time but only with a subset of classes (e.g., only labels $0$ and $1$). We model the dataset evolution by letting the clients have different set of classes (e.g., labels $2$ and $3$) in the next time step.

\begin{figure*}[t]
    \centering
    \includegraphics[width=1.\textwidth]{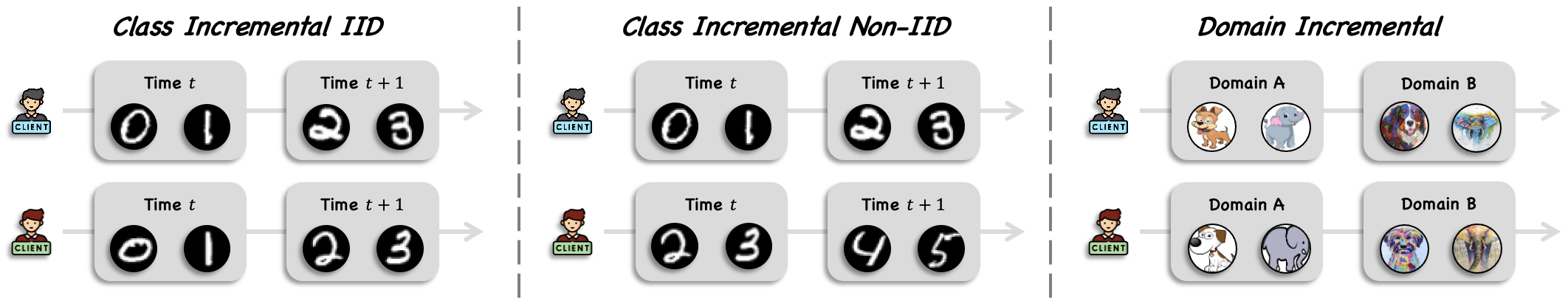}
    \caption{\textbf{Three Continual Federated Learning Scenarios.} Class Incremental IID: Clients have an identical class distribution, with classes incrementing over time. Class Incremental Non-IID: Clients have a non-identical class distribution, with classes incrementing over time. Domain Incremental: Clients data domain changes over time.}
    \label{Fig. FL_CL_Scenarios}
\end{figure*}

\subsection{Federated Averaging}

We take the vanilla FedAvg \cite{mcmahan2017communication} model as the general framework of our design, where the objective function with model parameter $\theta$ is defined as:
\begin{equation}
    \begin{aligned}
        &\min_\theta \left\{ F(\theta) \triangleq \sum_{k=1}^{K} p_k F_k(\theta) \right\}, \\
        &\mathrm{s.t.} \quad \sum_{k=1}^{K} p_k = 1, \quad p_k \ge 0,
    \end{aligned}
\end{equation}
where $K$ is the number of clients and $p_k$ is the weight of the $k$-th client. The local model learning loss function $F_k(\theta)$ is given by:
\begin{equation}
	F_k(\theta) \triangleq \frac{1}{a_k} \sum_{j=1}^{a_k} f(\theta;X_{k,j}) \quad \text{where} \quad a_k = |X_{k}|,
\end{equation}
where $f(\cdot)$ is the loss function, and $a_k$ is the number of training data in $k$-th client. Considering local clients in the $t$-th round, we have the aggregation of the global model as:
\begin{equation}
    \theta_{t} \leftarrow \sum^{K}_{k=1} p_k \theta^k_{t} \quad \text{where} \quad p_k = \frac{|X_{k}|}{|X|},
    \label{eq:aggregation}
\end{equation}
where $n$ represents the total number of samples across all clients. As in the FedAvg setup, the aggregation of models is performed by weighting each client's contribution according to the number of samples they possess.

\subsection{Denoising Diffusion Probabilistic Models}

Basic DDPM \cite{ho2020denoising} gradually adds random noise to the data over a series of time steps $(x_0, \cdots, x_N)$ in the forward process, where $x_0 = x,\ x_N = z$. Specifically, the sample at each time step is sampled from a Gaussian distribution conditioned on the sample from the previous time step with predefined schedule $\beta_{1:N}$,
\begin{equation}
	x_n \sim F(\cdot|x_{n-1})=\gN(\sqrt{1-\beta_n}x_{n-1}, \beta_n \mI).
	\label{eq:ddpm_forward}
\end{equation}
With \eqref{eq:ddpm_forward}, the sample at each step $t$ can be expressed as a function of $x_0$: $x_n = \sqrt{\alpha_n}x_0 + \sqrt{1 - \alpha_n} \epsilon$, where $\alpha_n = \prod_{s=0}^n 
(1 - \beta_s),\ \epsilon \sim \gN(0, \mI)$ \cite{sohl2015deep}. On the contrary, $x_N$ can be converted back to $x_0$ step-wisely via the reverse denoising process:
\begin{equation}
	x_{n-1} \sim G_{\omega}(\cdot|x_{n}) = \gN(\mu_\omega(x_n, n),\Sigma_\omega(x_n, n)),
\end{equation}
where $\mu_{\omega}$ and $\Sigma_\omega$ can be obtained from neural networks. The objective of learning this denoising function is to match the joint distributions of $x_{0:N}$ in the forward and reverse processes. To optimize this objective, Ho \textit{et al.} proposed a reformulation \cite{ho2020denoising} by specifying the variance schedule $\beta_{1:N}$ and fixing the reverse variance $\Sigma_\omega(x_n, n)$ to be $\beta_n \mI$, which is:
\begin{equation}
    \begin{aligned}
        \min_{\omega} \{&\gL(\omega) \triangleq \mathbb{E}_{n \sim \gU[1, N], x_0 \sim P_X(\cdot), \epsilon \sim \gN(0, \mI)} [ \lambda(n) \\
        &\Vert\epsilon - \epsilon_{\omega}(\sqrt{\alpha_n}x_0 + \sqrt{1-\alpha_n}\epsilon, n)\Vert^2] \},
    \end{aligned}
\label{ddpm:obj_real}
\end{equation}
where $\gL(\cdot)$ is the loss function of the diffusion model and $\lambda(n)$ is a positive weighting function usually set as 1 for all $n$ to improve sample quality. Recently, the polynomial bounds on the convergence rate of the diffusion model have also been given in \cite{benton2023linear}.
    \section{Methodology}
\label{sec:methodology}

\subsection{Proposed \NAME Framework}
\label{sec:CFL-CD}

We propose using a conditional diffusion model for replay in CFL, termed \NAME, as depicted in Figure \ref{Fig. FL_Diffusion}. In the \NAME framework, each client possesses a target model $\theta$ (for various FL tasks) and a diffusion model $\omega$ for replaying data distributions from previous time periods. The server receives and aggregates only the target models from each client while remaining unaware of the clients' diffusion models.

\begin{figure*}[ht]
    \centering
    \includegraphics[width=0.9\textwidth]{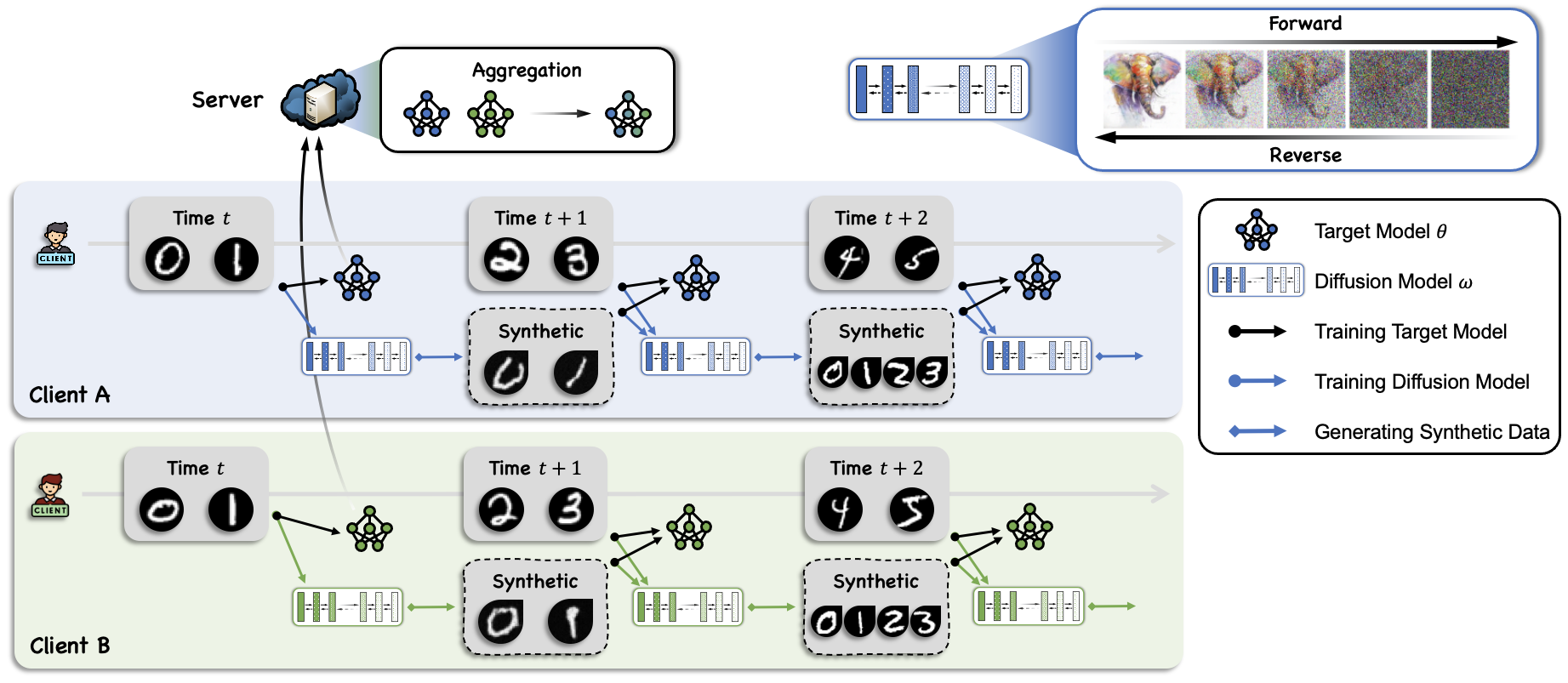}
    \caption{\textbf{Proposed \NAME Framework.} Each client has a target model and a diffusion model, both trained on the same dataset, consisting of the previous time period's real and synthetic data. The target model is uploaded to the server for aggregation, while the diffusion model remains local to prevent privacy leakage. The trained diffusion model will generate synthetic data encompassing all previously acquired knowledge.} 
    \label{Fig. FL_Diffusion}
\end{figure*}

We utilize Algorithm \ref{alg:cfl} to outline the general workflow of \NAME. The complete version is provided in Appendix \ref{appendix:alg}. As we adopt FedAvg as the backbone for FL, both the local devices and the global server perform updates and aggregation in a manner similar to FedAvg, including training the target local model with gradient descent, as shown in the algorithm. Additionally, we incorporate the diffusion model (in line 3 of Algorithm \ref{alg:cfl}) for each local device to address the latent input data distribution shift. This diffusion model is utilized to recover historical data experienced by generating synthetic data, thereby mitigating the issue of catastrophic forgetting. Before training the local target model, the diffusion model generates a portion of historical synthetic data mixed with current real data as input for learning. The diffusion model is also trained on the mixed dataset of real and synthetic data, which helps prevent catastrophic forgetting within the diffusion model itself. This process is repeated for each local client, and the learned model gradients are aggregated at the end of every communication round.

\begin{algorithm}[t]
\footnotesize
	\caption{\textbf{Proposed \NAME Framework} - See Algorithm \ref{alg:cfl_comp} for Complete Procedure}
	\label{alg:cfl}

        \textbf{Input:} Communication rounds ($T$), client datasets ($\gD^k_t$), \textcolor{Tan}{target model, loss function, and learning rate ($\theta$, $F$, $\eta_\theta$)}, \textcolor{RoyalBlue}{diffusion model, loss function, and learning rate ($\omega$, $\gL$, $\eta_\omega$)}

        \textbf{Output:} Generalized global model ($\theta_T$)
        \vspace{5pt}

	\begin{algorithmic}[1]
		\Statex \textbf{Local update} of the $k$-th client:
        \State \textbf{initialize} $\omega_0$
		\For{each round $t=1:T$}
        \State Obtain $\gG^k_{t-1} \leftarrow \omega^k_{t-1}(\gG^k_{t-1,n}\sim\gN(0,\mI))$ via reverse process \textcolor{RoyalBlue}{\Comment{Generate synthetic data}}
        \State Combine real and synthetic data with a scale factor $\delta$: $\gX^k_t=\gD^k_t \cup \delta\cdot\gG^k_{t-1}$

        \State $\theta^k_{t} \leftarrow \theta^k_{t-1} - \eta_\theta\nabla F_k(\theta^k_{t-1};\gX^k_t)$ \textcolor{Tan}{\Comment{Train target model}}
        \State {\bf repeat}\xspace$\omega^k_{t} \leftarrow \omega^k_{t-1} - \eta_\omega\nabla \gL_k(\omega^k_{t-1};\gX^k_t)$\hspace{0.5em}{\bf until}\xspace converge 
            \Statex \textcolor{RoyalBlue}{\Comment{Train diffusion model}}
		\EndFor
		\State \textbf{return} $\theta^k_{t}$ to server
		\Statex

\end{algorithmic}
\begin{algorithmic}[1]

		\Statex \textbf{Global update} of the server
		\State \textbf{initialize} $\theta_0$
		\For{each round $t=1:T$}
		\For{each client $k=1:K$ \textbf{in parallel}}
		\State $\theta^k_{t} \leftarrow$ $k$-th client's \textit{local update}
		\EndFor
		\State $\theta_t \leftarrow \sum_{k=1}^{K}p_k\theta^k_t$ \textcolor{Tan}{\Comment{Aggregate target models}}
		\EndFor
	\end{algorithmic}
\end{algorithm}

\subsection{Convergence Bound of CFL Framework with Diffusion}
\label{sec:proof}

To determine the convergence bound of the designed CFL model, we need to 1) find the convergence of the general FL framework with both original data and synthetic data, 2) verify the convergence of data generation in the integrated diffusion model, and 3) prove the convergence of the whole model regarding the latent input data distribution shift. To start with, we provide several standard assumptions that have been widely used in the FL literature \cite{li2019convergence, reisizadeh2020fedpaq}.

\begin{assumption} 
	$F_1, \cdots, F_K$ are all $L$-smooth:
	for all model parameters $\theta_1$ and $\theta_2$, $F_k(\theta_1)  \leq F_k(\theta_2) + (\theta_1 - \theta_2)^T \nabla F_k(\theta_2) + \frac{L}{2} \Vert \theta_1 - \theta_2\Vert_2^2$.
	\label{assumption:smooth}
\end{assumption}

\begin{assumption} 
	$F_1, \cdots, F_K$ are all $\mu$-strongly convex:
	for all model parameters $\theta_1$ and $\theta_2$, $F_k(\theta_1)  \geq F_k(\theta_2) + (\theta_1 - \theta_2)^T \nabla F_k(\theta_2) + \frac{\mu }{2} \Vert \theta_1 - \theta_2\Vert_2^2$.
	\label{assumption:strong_cvx}
\end{assumption}

\begin{assumption} 
	Let $\xi_t^k$ be sampled from the $k$-th device's local data uniformly at random.
	The variance of stochastic gradients in each device is bounded: $\E \left\Vert \nabla F_k(\theta_t^k,\xi_t^k) - \nabla F_k(\theta_t^k) \right\Vert^2 \le \sigma_k^2$ for $k=1,\cdots,K$.
	\label{assumption:sgd_var}
\end{assumption}

\begin{assumption} 
	The expected squared norm of stochastic gradients is uniformly bounded, i.e., $\E \left\Vert \nabla F_k(\theta_t^k,\xi_t^k) \right\Vert^2  \le G^2$ for all $k=1,\cdots,K$ and $t=1,\cdots, T-1$.
	\label{assumption:sgd_norm}
\end{assumption}

The basic FedAvg model has been proven to converge to a global optimum in non-iid settings \cite{li2019convergence}. When all the devices participate in the aggregation step and the FedAvg algorithm terminates after $T$ rounds, the following lemma will hold.
\begin{lemma}[FedAvg convergence bound \cite{li2019convergence}]
	Let Assumptions~\ref{assumption:smooth} to \ref{assumption:sgd_norm} hold and $L, \mu, \sigma_k, G$ be defined therein. Choose $\kappa = \frac{L}{\mu}$, $\gamma = \max\{8\kappa, E\}$ and the learning rate $\eta_t = \frac{2}{\mu (\gamma+t)}$. 
	Then FedAvg with full device participation to the optimal $F^*$ satisfies:
	\begin{equation}
		\E \left[ F(\theta_T)\right] - F^* \leq \frac{\kappa}{\gamma +T-1} \left( \frac{2B}{\mu} + \frac{\mu \gamma}{2} \E \Vert\theta_1 - \theta^*\Vert^2 \right),
		\label{eq:bound_K=K}
	\end{equation}
	where $B = \sum_{k=1}^K p_k^2 \sigma_k^2 + 6L \Gamma + 8  (E-1)^2G^2$ and $\Gamma = F^* - \sum^K_{k=1}p_k F^*_k$ measuring the degree of non-iid.
	\label{lemma:fedavg_bound}
\end{lemma}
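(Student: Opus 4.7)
The plan is to follow the standard virtual-sequence analysis from \cite{li2019convergence}, since the statement is a direct quotation of their main convergence theorem for FedAvg under full device participation. First I would introduce a virtual averaged iterate $\bar{\theta}_t \triangleq \sum_{k=1}^K p_k \theta_t^k$ that coincides with the aggregated global model on communication rounds but is defined on every inner step, together with its gradient analogues $\bar{g}_t = \sum_k p_k \nabla F_k(\theta_t^k)$ and $g_t = \sum_k p_k \nabla F_k(\theta_t^k, \xi_t^k)$. The update rule then reads $\bar{\theta}_{t+1} = \bar{\theta}_t - \eta_t g_t$, and I would expand
\begin{equation*}
\E\bigl\Vert\bar{\theta}_{t+1} - \theta^*\bigr\Vert^2
= \E\bigl\Vert\bar{\theta}_t - \theta^* - \eta_t \bar{g}_t\bigr\Vert^2 + \eta_t^2 \,\E\Vert g_t - \bar{g}_t\Vert^2,
\end{equation*}
using the unbiasedness of the stochastic gradients with respect to $\bar{g}_t$.

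Next I would bound each piece using Assumptions~\ref{assumption:smooth}--\ref{assumption:sgd_norm}. The stochastic-noise term is controlled by Assumption~\ref{assumption:sgd_var} and independence across clients, giving $\E\Vert g_t - \bar{g}_t\Vert^2 \le \sum_k p_k^2 \sigma_k^2$. For the deterministic term, I would split it into a contraction piece and a local-drift piece: expanding the square and applying $L$-smoothness plus $\mu$-strong convexity in the standard convex optimization way yields
\begin{equation*}
\bigl\Vert\bar{\theta}_t - \theta^* - \eta_t \bar{g}_t\bigr\Vert^2 \le (1-\eta_t \mu)\Vert\bar{\theta}_t - \theta^*\Vert^2 + 2L\eta_t^2\Gamma + \text{(drift terms)},
\end{equation*}
where the $6L\Gamma$ constant in $B$ emerges from relating $\sum_k p_k\bigl(F_k(\theta^*) - F_k^*\bigr)$ to $\Gamma$.

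The main obstacle, and the step that would require the most care, is bounding the local drift $\sum_k p_k \E\Vert\bar{\theta}_t - \theta_t^k\Vert^2$ that appears when $\nabla F_k(\theta_t^k)$ is evaluated at the local iterate rather than at $\bar{\theta}_t$. Because the local models only synchronize every $E$ steps, one has to unroll the $E$ local SGD updates between two synchronizations and apply Assumption~\ref{assumption:sgd_norm} step-by-step; this is what produces the $8(E-1)^2 G^2$ term in the definition of $B$. A convexity argument (Jensen on the $\Vert\cdot\Vert^2$ together with the bounded-gradient assumption) controls this drift uniformly in $t$.

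Combining the three bounds gives a one-step recursion of the form
\begin{equation*}
\E\Vert\bar{\theta}_{t+1} - \theta^*\Vert^2 \le (1-\eta_t\mu)\,\E\Vert\bar{\theta}_t - \theta^*\Vert^2 + \eta_t^2 B.
\end{equation*}
Finally I would plug in the decaying learning rate $\eta_t = 2/[\mu(\gamma+t)]$ and solve the recursion by induction, verifying the ansatz $\E\Vert\bar{\theta}_t - \theta^*\Vert^2 \le v/(\gamma+t)$ with $v = \max\{4B/\mu^2, (\gamma+1)\Vert\theta_1-\theta^*\Vert^2\}$. Converting back to function values via $L$-smoothness, $\E[F(\bar{\theta}_T)] - F^* \le \tfrac{L}{2}\E\Vert\bar{\theta}_T - \theta^*\Vert^2$, and identifying $\kappa = L/\mu$ yields exactly the bound stated in \eqref{eq:bound_K=K}. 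Since the lemma is imported verbatim from \cite{li2019convergence}, I would refer the reader there for the detailed algebra rather than redoing it in full.
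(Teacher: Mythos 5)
Your proposal is correct and takes essentially the same route as the paper: the paper's own ``proof'' of this lemma is simply a citation to Section 3.2 of \cite{li2019convergence}, and your sketch accurately reproduces the standard virtual-sequence argument from that reference (one-step recursion, variance and local-drift bounds yielding $B$, and induction under the decaying step size) before likewise deferring to the source for the detailed algebra.
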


\begin{proof}
	See Section 3.2 in \cite{li2019convergence}.
\end{proof}

Different from conventional FL, in the CL context, the data distribution will shift at every step, introducing the catastrophic forgetting issue. In our framework, we avoid this problem by leveraging the diffusion model to generate labeled synthetic data that can reflect the experienced historical real data without storing them physically. Therefore, at step $t + 1$, the input data $\gX_{t+1}$ is the combination of real data $\gD_{t+1}$ and synthetic data $\gG_t$ from the diffusion model recovering the data from the previous step, satisfying:
\begin{equation}
    \gX_{t+1} \equiv \gD_{t+1} \cup \delta\cdot\gG_t,
    \label{eq:relation}
\end{equation}
where $\delta$ is an extra scale factor controlling the amount of generated synthetic data compared to real data to mitigate the negative impacts of inaccurate synthesis generations that cannot represent previous real inputs. For simplicity of the derivation, we omit this factor (set $\delta = 1$) in this proof, while we use the sensitivity study regarding $\delta$ in Appendix \ref{appendix:Sensitivity NumSynSamples} for a thorough discussion. Besides, according to \eqref{eq:relation}, the sampled data distributions ${\xi}^k_{t+1}$ regarding each part can be describe as follows:
\begin{equation}
	\begin{aligned}
		&\xi^k_{t+1} \sim \gX_{t+1}, \quad \tilde{\xi}^k_{t+1} \sim \gD_{t+1}, \quad \tilde{\tilde{\xi}}^k_{t+1} \sim \gG_t,
	\end{aligned}
        \label{eq:distribution}
\end{equation}

From $t$ to $t + 1$, the input data distribution is different as new data loaded at $t + 1$ has not been seen at $t$, and the synthetic data recovering the loaded data at $t$ will deviate from the real data distribution. We capture such data distribution shift in a controllable range, measured by $\Delta_t$, which aligns with most learning scenarios. This character is depicted in the following assumption.

\begin{assumption}
    In each continual learning step, the incoming data distribution shift is bounded and deviation of can be captured in a measurable range with $\Delta_t$, which is: $D_\mathrm{KL}(F^*_t(\theta^k_t;\xi^k_t) \parallel F^*_{t+1}(\theta^k_{t+1};\tilde{\xi}^k_{t+1})) \leq \Delta_t$.
    \label{assumption:shift_bound}
\end{assumption}

Based on the given assumption, we measure the distance between $F^*_t$ and $F^*_{t+1}$ with KL divergence:

\begin{theorem}[Data distribution deviation measurement]
    The following equation can further bound the KL divergence:
    \begin{equation}
        \begin{aligned}
            &D_\mathrm{KL}\left( F^*_t(\theta^k_t;\xi^k_t) \parallel F^*_{t+1}(\theta^k_{t+1};\xi^k_{t+1}) \right) \\
            &\leq \frac{1}{2}\left[ D_\mathrm{KL}\left( F^*_t(\theta^k_t;\xi^k_t) \parallel F^*_{t+1}(\theta^k_{t+1};\tilde{\tilde{\xi}}^k_{t+1}) \right) + \Delta_t \right],
        \end{aligned}
        \label{eq:data_bound}
        \end{equation}
    where $\Delta_t$ is defined by the upper bound of incoming data distribution shift based on Assumption~\ref{assumption:shift_bound}.
    \label{theorem:data_bound}
\end{theorem}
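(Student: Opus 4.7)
The plan is to exploit two facts: first, the structural relation $\gX_{t+1} = \gD_{t+1} \cup \gG_t$ from \eqref{eq:relation} which lets us write the distribution of $\xi^k_{t+1}$ as a mixture of the real-data distribution $\tilde{\xi}^k_{t+1}$ and the synthetic-data distribution $\tilde{\tilde{\xi}}^k_{t+1}$; and second, the (joint) convexity of KL divergence, which allows us to split a KL against a mixture into a convex combination of KL terms. Then Assumption~\ref{assumption:shift_bound} takes care of the real-data branch.

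In more detail, with $\delta = 1$ (as stipulated for this derivation) and the two pools contributing equally, the sampling distribution is the equal mixture
\begin{equation*}
F^*_{t+1}(\theta^k_{t+1};\xi^k_{t+1}) = \tfrac{1}{2} F^*_{t+1}(\theta^k_{t+1};\tilde{\xi}^k_{t+1}) + \tfrac{1}{2} F^*_{t+1}(\theta^k_{t+1};\tilde{\tilde{\xi}}^k_{t+1}).
\end{equation*}
I would then invoke convexity of $D_\mathrm{KL}(P \parallel \cdot)$ in its second argument (a direct consequence of joint convexity of KL) to obtain
\begin{equation*}
D_\mathrm{KL}\!\left( F^*_t(\theta^k_t;\xi^k_t) \,\big\|\, F^*_{t+1}(\theta^k_{t+1};\xi^k_{t+1}) \right) \leq \tfrac{1}{2} D_\mathrm{KL}\!\left( \cdot \,\big\|\, F^*_{t+1}(\theta^k_{t+1};\tilde{\xi}^k_{t+1}) \right) + \tfrac{1}{2} D_\mathrm{KL}\!\left( \cdot \,\big\|\, F^*_{t+1}(\theta^k_{t+1};\tilde{\tilde{\xi}}^k_{t+1}) \right).
\end{equation*}
Finally, applying Assumption~\ref{assumption:shift_bound} to the first term on the right bounds it by $\Delta_t/2$, which combines with the remaining $1/2$ factor on the synthetic-branch term to yield exactly the claimed inequality \eqref{eq:data_bound}.

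The main obstacle, I expect, is justifying the equal-weight mixture representation rigorously. This requires that $|\gD_{t+1}|$ and $|\gG_t|$ be comparable (so that uniform sampling from their union produces the half-and-half mixture); if $\delta \neq 1$ or the two sets have different cardinalities the weights would become $\lambda$ and $1-\lambda$, and the prefactor $1/2$ in the stated bound would turn into $\max(\lambda, 1-\lambda)$ or similar. The authors sidestep this by fixing $\delta = 1$ and implicitly assuming balanced pool sizes. The remaining step, convexity of KL in the second argument, is standard and can simply be cited. No other technical machinery (smoothness, strong convexity, gradient bounds) is needed here — the theorem is purely a statement about how a mixture-based data source distributes distributional drift between the real-data shift (controlled by Assumption~\ref{assumption:shift_bound}) and the synthetic-generation error (left to be controlled later by the diffusion model's convergence).
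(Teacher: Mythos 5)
Your proof is correct and lands on the same high-level decomposition as the paper's: write $F^*_{t+1}(\theta^k_{t+1};\xi^k_{t+1})$ as the equal mixture of the real-data and synthetic-data branches, split the KL divergence accordingly, and absorb the real-data branch into $\Delta_t$ via Assumption~\ref{assumption:shift_bound}. The difference lies in how the splitting step is justified. You invoke convexity of $D_\mathrm{KL}(P\parallel\cdot)$ in its second argument, which directly yields $D_\mathrm{KL}\bigl(P\parallel\tfrac12 Q_1+\tfrac12 Q_2\bigr)\le\tfrac12 D_\mathrm{KL}(P\parallel Q_1)+\tfrac12 D_\mathrm{KL}(P\parallel Q_2)$. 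The paper instead argues pointwise: it writes $\log x=\tfrac12\log x^2$ and uses $(b+c)^2\ge bc$ to bound $\log\bigl[2a/(b+c)\bigr]^2$ by $\log(2a/b)+\log(2a/c)$, then identifies the two resulting sums with KL divergences. Your route is cleaner and in fact slightly tighter: the paper's pointwise manipulation leaves an additive $\log(1+\delta)$ (i.e., $\log 2$ when $\delta=1$) inside each term, coming from the $(1+\delta)^{-1}$ normalization of the mixture, which its final line silently discards, whereas the convexity argument produces the stated right-hand side exactly. Your caveat about the mixture weights is also well placed — both proofs rely on the two pools contributing with equal weight, which the paper does not justify beyond setting $\delta=1$ — and in both arguments the last step should be an inequality rather than an equality, since Assumption~\ref{assumption:shift_bound} only upper-bounds the real-data KL term by $\Delta_t$.
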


\begin{proof}
    See Appendix \ref{proof:data_bound}.
\end{proof}

The KL divergence in the right-hand side of \eqref{eq:data_bound} indicates the distance between original real data from step $t$ and generated synthetic data in step $t + 1$. We use the diffusion model as the generative model in our framework, where we aim to reconstruct the real data distribution from the previous step with newly generated synthetic data. Therefore, it is equivalent to measuring the reconstruction convergence of the incorporated diffusion model. Recently, such a convergence bound has been proposed in~\cite{benton2023linear}, summarized in the following lemma.

\begin{lemma}[Convergence of data generation via the diffusion model) \cite{benton2023linear}]
    Suppose there exists $\kappa > 0$ controlling the diffusion step size $\gamma$ such that for each $m=0,\dots, M-1$, we have $\gamma_m \leq \kappa \in\{1, N - n_{m+1}\}$ where $N$ is the total step in the diffusion model. Then, the bound of the approximated reverse process for data generation is:
    \begin{equation}
        \begin{aligned}
            &D_\mathrm{KL}\left( F^*_t(\theta^k_t;\xi^k_t) \parallel F^*_{t+1}(\theta^k_{t+1};\tilde{\tilde{\xi}}^k_{t+1}) \right) \\
            &\lesssim \epsilon^2_{\rm score} + \kappa^2 dM + \kappa dN + d\exp(-2N),
        \end{aligned}
    \end{equation}
    where $d$ is the dimension of the data and $\epsilon_{\rm score}$ denotes the maximum score approximation {\rm \cite{chen2023score}} error.
    \label{lemma:diffusion_bound}
\end{lemma}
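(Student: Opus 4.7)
The plan is to follow the approach of Benton et al.\ \cite{benton2023linear}, which establishes convergence of the discretized reverse diffusion by decomposing the total KL divergence into three interpretable sources of error: the score approximation error, the time-discretization error of the reverse SDE, and the Gaussian initialization error from starting the reverse process at $\mathcal{N}(0,\mathbf{I})$ rather than at the true terminal marginal of the forward OU process. Since the left-hand side measures the distance between the real data distribution at step $t$ and the distribution of samples produced by the trained diffusion model, I would first identify these two distributions with the marginals of the true reverse SDE initialized at the forward marginal $p_N$ and of the learned, discretized reverse process initialized at $\mathcal{N}(0,\mathbf{I})$, respectively, treating the law of $\xi^k_t$ as the target data distribution and the law of $\tilde{\tilde{\xi}}^k_{t+1}$ as the output of the reverse process used in Algorithm \ref{alg:cfl}.

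Next, I would apply the data-processing inequality to bound the terminal KL by the path-space KL between these two diffusion processes, and then invoke Girsanov's theorem on the backward time interval to rewrite the path KL as the expected squared $L^2$ distance between the true score $\nabla\log p_n$ and the learned score network $\epsilon_{\omega}$, integrated along the reverse trajectory. This integral naturally decomposes across the $M$ discretization sub-intervals, each of width at most $\kappa$. Within each sub-interval, adding and subtracting the learned score evaluated at the left endpoint splits the local contribution into three pieces: the score-mismatch piece sums to $\epsilon_{\rm score}^2$ by the assumed bound on the score approximation; the piece comparing the continuous-time score to its piecewise-constant surrogate is controlled using the regularity of the score along the OU semigroup, producing $\kappa^2 d$ per interval and hence $\kappa^2 dM$ after summation; and an additional $\kappa dN$ term arises from a sharper stochastic-localization control on the score's variation that avoids a naive quadratic dependence on $N$. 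Finally, the mismatch between initializing at $p_N$ and initializing at $\mathcal{N}(0,\mathbf{I})$ is handled via the exponential $L^2$ mixing of the OU process to its stationary Gaussian, giving the residual $d\exp(-2N)$ contribution.

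The hard part will be the stochastic-localization estimate responsible for the $\kappa dN$ term, which delivers a linear rather than quadratic dependence on $N$; it relies on moment identities for the denoising posterior under the OU semigroup and is the central technical ingredient of \cite{benton2023linear}. Since this analysis is delicate and orthogonal to the federated-learning contributions here, I would invoke the main theorem of \cite{benton2023linear} directly under the stated condition $\gamma_m \leq \kappa$, and only verify that the regularity assumptions on the forward process hold in our setting, which is immediate because the forward process used by \NAME is precisely the variance-preserving OU process studied in that reference.
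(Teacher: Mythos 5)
Your proposal takes essentially the same route as the paper: the paper's entire proof of this lemma is a citation to Section 3 of Benton et al.~\cite{benton2023linear}, and you likewise end by invoking their main theorem directly, after correctly identifying the two distributions with the data law and the output of the learned discretized reverse process. Your sketch of the internals of that reference (Girsanov/data-processing decomposition into score, discretization, and initialization errors, with stochastic localization supplying the near-linear dependence) is a faithful summary rather than a different argument, so there is nothing to reconcile.
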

\begin{proof}
	See Section 3 in~\cite{benton2023linear}.
\end{proof}

Therefore, by integrating Lemma \ref{lemma:fedavg_bound},  \ref{lemma:diffusion_bound} and Theorem \ref{lemma:fedavg_bound}, our CFL framework with the diffusion model \NAME can be proven bounded during the learning.

\begin{theorem}[Convergence of \NAME]
    The convergence bound derived from Lemma \ref{lemma:fedavg_bound},  \ref{lemma:diffusion_bound} and Theorem \ref{theorem:data_bound} is:
    \begin{equation}
		\begin{aligned}
            \E \left[ F(\theta_T)\right] - F^*_{T} \leq &\frac{\kappa}{\gamma +T-1} \left( \frac{2B}{\mu} + \frac{\mu \gamma}{2} \E \Vert\theta_1 - \theta^*\Vert^2 \right) \\ 
            & + (1 - 2^{-T}) (\epsilon^2_{\rm score} + \kappa^2 dM + \kappa dN  \\
            & + d\exp(-2N) ) + 2^{-T}\Delta_T.
		\end{aligned}
        \label{eq:cfl_bound}
	\end{equation}
    \label{theorem:cfl_bound}
\end{theorem}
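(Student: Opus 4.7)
The plan is to combine the three ingredients by treating Lemma~\ref{lemma:fedavg_bound} as the ``base'' bound that would hold if the data distribution were static, and then add correction terms that account for (i) the unavoidable temporal shift $\Delta_t$ and (ii) the imperfect reconstruction of the diffusion model at each round. The architecture of the final inequality \eqref{eq:cfl_bound} already tells us how the bookkeeping should look: the coefficient $(1 - 2^{-T})$ in front of the diffusion terms and the $2^{-T}$ in front of $\Delta_T$ are the unmistakable signature of a geometric series arising from iterating the factor $\tfrac{1}{2}$ that appears in Theorem~\ref{theorem:data_bound}. So the proof has to be a telescoping recursion in $t$.

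First I would invoke Lemma~\ref{lemma:fedavg_bound} to bound $\E[F(\theta_T)] - F^*_T$ by the standard FedAvg term $\frac{\kappa}{\gamma+T-1}\bigl(\frac{2B}{\mu} + \frac{\mu\gamma}{2}\E\|\theta_1-\theta^*\|^2\bigr)$ \emph{plus} a residual that measures how far the effective loss $F^*_T$ sits from the loss that would have been minimized had the learner kept access to the original data distribution all the way back to $t=1$. The natural surrogate for that residual is $D_{\mathrm{KL}}\bigl(F^*_0(\theta^k_0;\xi^k_0)\,\|\,F^*_T(\theta^k_T;\xi^k_T)\bigr)$, which captures the drift of the data distribution over the entire horizon after the diffusion replay is taken into account.

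Next I would iterate Theorem~\ref{theorem:data_bound}. Writing $U_t := D_{\mathrm{KL}}\!\left(F^*_t(\theta^k_t;\xi^k_t)\,\|\,F^*_{t+1}(\theta^k_{t+1};\xi^k_{t+1})\right)$ and $V_t := D_{\mathrm{KL}}\!\left(F^*_t(\theta^k_t;\xi^k_t)\,\|\,F^*_{t+1}(\theta^k_{t+1};\tilde{\tilde\xi}^k_{t+1})\right)$, Theorem~\ref{theorem:data_bound} gives $U_t \leq \tfrac{1}{2}(V_t + \Delta_t)$. Applying this recursively across $t=1,\dots,T$ and substituting Lemma~\ref{lemma:diffusion_bound}'s bound $V_t \lesssim \epsilon^2_{\mathrm{score}} + \kappa^2 dM + \kappa dN + d\exp(-2N) =: D$ at every step yields
\begin{equation*}
U_T \;\leq\; D\sum_{t=1}^{T} 2^{-t} \;+\; 2^{-T}\Delta_T \;=\; (1-2^{-T})\,D \;+\; 2^{-T}\Delta_T ,
\end{equation*}
where the geometric sum $\sum_{t=1}^T 2^{-t} = 1 - 2^{-T}$ is exactly what produces the two coefficients in \eqref{eq:cfl_bound}. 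To finish, I would add this residual to the FedAvg bound from the first step.

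The main obstacle I anticipate is the cleanest justification of the additive decomposition between the FedAvg expected-suboptimality bound (which lives in parameter space) and the KL-based distributional bounds (which live in distribution space). Lemma~\ref{lemma:fedavg_bound} delivers a bound on $\E[F(\theta_T)]-F^*$ assuming a fixed distribution, while Theorem~\ref{theorem:data_bound} and Lemma~\ref{lemma:diffusion_bound} bound KL divergences between losses under different distributions; gluing them requires either a Lipschitz/smoothness argument that converts KL drift into loss drift, or a direct triangle-type inequality on expected losses. I would handle this by exploiting the $L$-smoothness in Assumption~\ref{assumption:smooth} together with Pinsker-style control, so that the distributional residual $(1-2^{-T})D + 2^{-T}\Delta_T$ can be absorbed additively into the suboptimality gap. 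The remaining steps are then routine substitution and collection of terms to produce exactly the stated inequality \eqref{eq:cfl_bound}.
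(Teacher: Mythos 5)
Your proposal matches the paper's proof essentially step for step: the paper likewise takes the FedAvg bound of Lemma~\ref{lemma:fedavg_bound} as the base term $A$, iterates Theorem~\ref{theorem:data_bound} into a nested recursion whose geometric series $\sum_{t=1}^{T}2^{-t}$ produces the coefficients $(1-2^{-T})$ and $2^{-T}$, substitutes Lemma~\ref{lemma:diffusion_bound} for the diffusion KL term at each level, and adds the result to the FedAvg term. The one point where you are more careful than the paper is the gluing step you flag as the main obstacle: the paper simply writes $\E\left[F(\theta_T)\right] - F^*_T \leq \E\left[F(\theta_T)\right] + D_\mathrm{KL}(F^*_1 \parallel F^*_2) - F^*_T$ and appends the nonnegative KL residual to the right-hand side without any smoothness or Pinsker-type conversion from distribution space to loss space, so the concern you raise is legitimate but is not actually resolved in the paper's own argument either.
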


\begin{proof}
    See Appendix \ref{proof:cfl_bound}.
\end{proof}

The bound calculated in \eqref{eq:cfl_bound} offers a theoretical measurable convergence bound for our proposed continual federated learning model, which utilizes the diffusion model as a synthetic data generator. The convergence bound, as outlined in Theorem \ref{theorem:cfl_bound}, comprises three components: the convergence bound of the federated learning model, the bound of the diffusion model, and the divergence of the data distribution. As the number of communication rounds approaches infinity ($T\rightarrow\infty$), the first and third terms in \eqref{eq:cfl_bound} tend towards zero, rendering only the second term relevant. This suggests that the convergence ultimately hinges on the performance of the introduced diffusion model. However, according to Corollary 1 in \cite{benton2023linear}, under specific conditions, the second term relies solely on the error $\epsilon^2_{\rm score}$, signifying the final convergence of the entire system.

It is worth noting that the model experiences an inevitable distribution shift at each step. However, the last term in \eqref{eq:cfl_bound} decreases monotonically, indicating that the deviation $2^{-t}\Delta_t$ regarding a particular round $t$ is mitigated with each training step, as the diffusion model will contribute synthetic data for alleviating the data distribution shift.

    \section{Experiments}
\label{sec:experiments}

\subsection{Experimental Setup}
\label{sec:Experimental Setup}

\textbf{Datasets.} 
We adopt commonly used datasets, including \textbf{MNIST} \cite{lecun1998gradient}, \textbf{Fashion-MNIST} \cite{xiao2017fashion}, and \textbf{CIFAR-10} \cite{krizhevsky2009learning}, for the Class Incremental IID and Class Incremental Non-IID CFL scenarios described in Figure \ref{Fig. FL_CL_Scenarios}. Our sampling method aligns with FedAvg's but is adapted for the CFL setting. Specifically, we uniformly split the datasets into 200 \textit{shards} based on their classes, with each client accessing only 2 shards during any given \textit{session} (i.e., a period of time). The distribution of data changes in subsequent sessions according to the different CFL scenario settings. We consider $T=100$ communication rounds for preliminary experiments, with the clients' data distribution changing every 20 rounds, resulting in 5 sessions ($S = 100/20 = 5$). Additionally, we use the popular domain generalization dataset \textbf{PACS} \cite{li2017deeper} for the Domain Incremental CFL scenario. We consider each client to have all classes within any given domain, and the clients' data changes across the 4 domains in the sequence Sketch $\rightarrow$ Cartoon $\rightarrow$ Art Painting $\rightarrow$ Photo (increasing the level of realism over time) \cite{zhao2023does}. Details of the datasets, scenario settings, and data preprocessing can be found in Appendix \ref{appendix:Datasets}.

\textbf{Implementation.} To reduce computational overhead, the target model is trained 20 times during any session, whereas the diffusion model is trained only once and generates synthetic data once. Apart from $T=100$ communication rounds, we set the target model to undergo $E_\theta=5$ local epochs, while the diffusion model is trained for $E_\omega=100$ or $E_\omega=1000$ local epochs. Both models employ the Adam optimizer with a learning rate of 1e-4. We employ the same CNN architecture used in FedAvg for constructing the target model, consisting of two convolutional layers and two fully connected layers. 
The backbone of the diffusion model is a conditional UNet, structured with one convolutional layer and four-channel layers scaled to 64, 128, 128, and 256, respectively. Real labels (and the domain for PACS) are utilized as conditions to guide data synthesis by the diffusion model. We set the diffusion model to generate a number of synthetic samples equal to the number of real samples, the ratio $\delta=1$. Details of implementation details can be found in Appendix \ref{appendix:Implementation}.

\textbf{Baselines.} 
We compare our \NAME with baselines with several frameworks: FL algorithms, FL with classical CL methods, and state-of-the-art (SOTA) CFL frameworks. Specifically, for FL, we compare with \textbf{FedAvg} \cite{mcmahan2017communication} and \textbf{FedProx} \cite{li2020federated}. For the integration of FL and CL, we implement \textbf{FedAvg+LwF} \cite{li2017learning} and \textbf{FedAvg+EWC} \cite{kirkpatrick2017overcoming}. We adopt \textbf{FedAvg+ACGAN} \cite{odena2017conditional} as the integration of the FL and generative model. For SOTA CFL frameworks, we consider \textbf{FedCIL} \cite{qi2023better}, \textbf{FOT} \cite{bakman2023federated}, \textbf{MFCL} \cite{babakniya2024data}, and \textbf{TARGET} \cite{zhang2023target}. Details of the baseline algorithms and settings can be found in Appendix \ref{appendix:Baselines}.

\subsection{Main Experimental Results}

Figure \ref{Fig. Experimental results} illustrates the performance comparison of \NAME against baselines across three CFL scenarios on three datasets. The results indicate that catastrophic forgetting significantly impacts the clients' ability to retain prior knowledge. In the Class-Incremental IID scenario, the class distributions among all clients change every $T=20$ rounds, with each client generally aware of a subset (i.e., 20\%) of all classes at any given time. The stepwise improvement in accuracy suggests effective retention and mastery of both current and previous knowledge by the model. For the Class Incremental Non-IID scenario, all clients are aware of all class information at any time, leading to convergent behavior across all frameworks. However, the replay functionality in our framework, which ensures that each client's target model is trained with multiple class information, facilitates faster convergence. 

\begin{figure}[ht]
\scriptsize
\centering
\setlength{\tabcolsep}{5pt}
\renewcommand{\arraystretch}{0.8}
\begin{tabular}{cccc}
& \textbf{\hspace{3pt} MNIST} & \textbf{\hspace{3pt} Fashion-MNIST} & \textbf{\hspace{3pt} CIFAR-10} \\
\rotatebox[origin=c]{90}{\textbf{Class Incremental IID}} &
\raisebox{-0.5\height}{\includegraphics[width=0.3\linewidth]{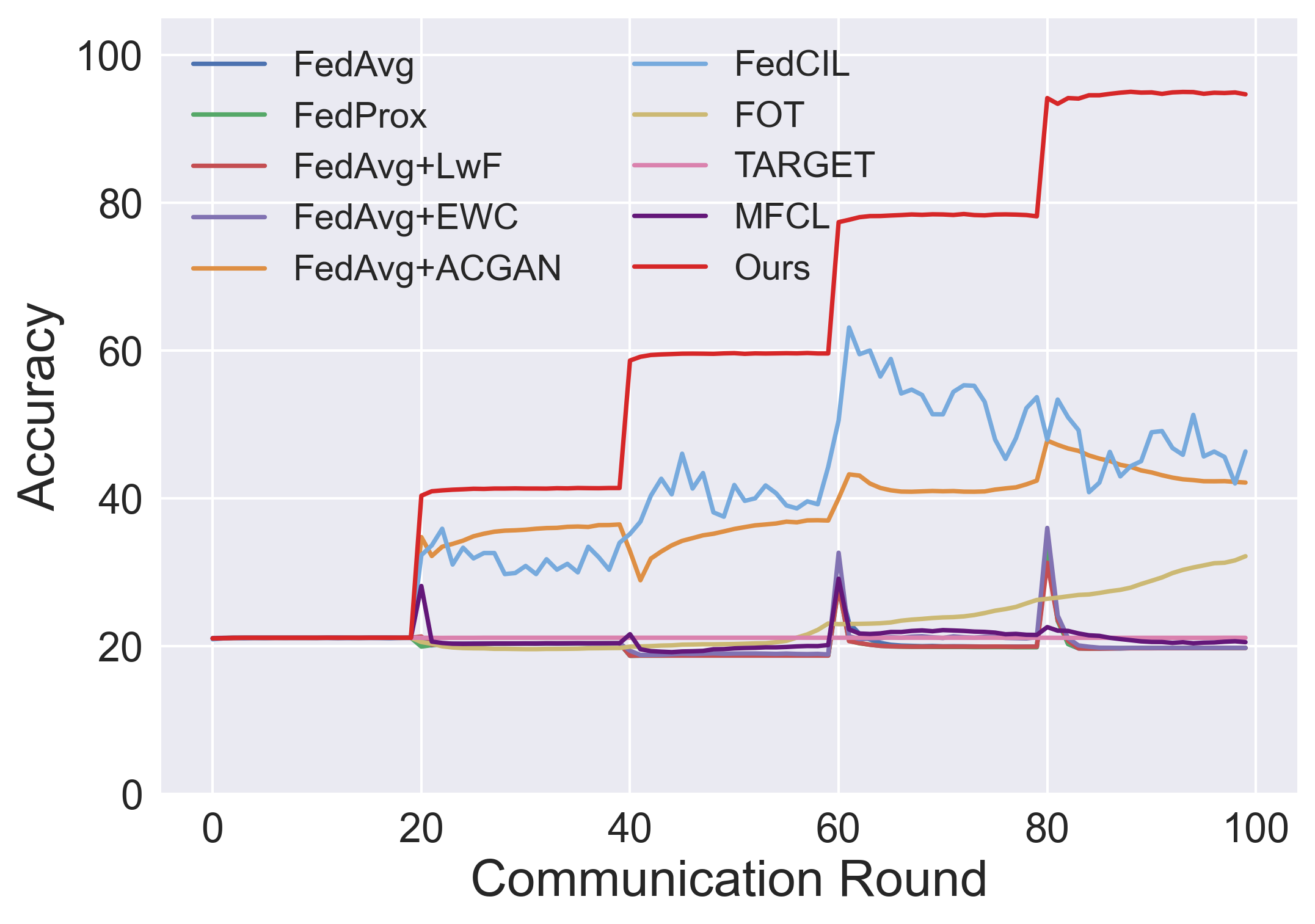}} &
\raisebox{-0.5\height}{\includegraphics[width=0.3\linewidth]{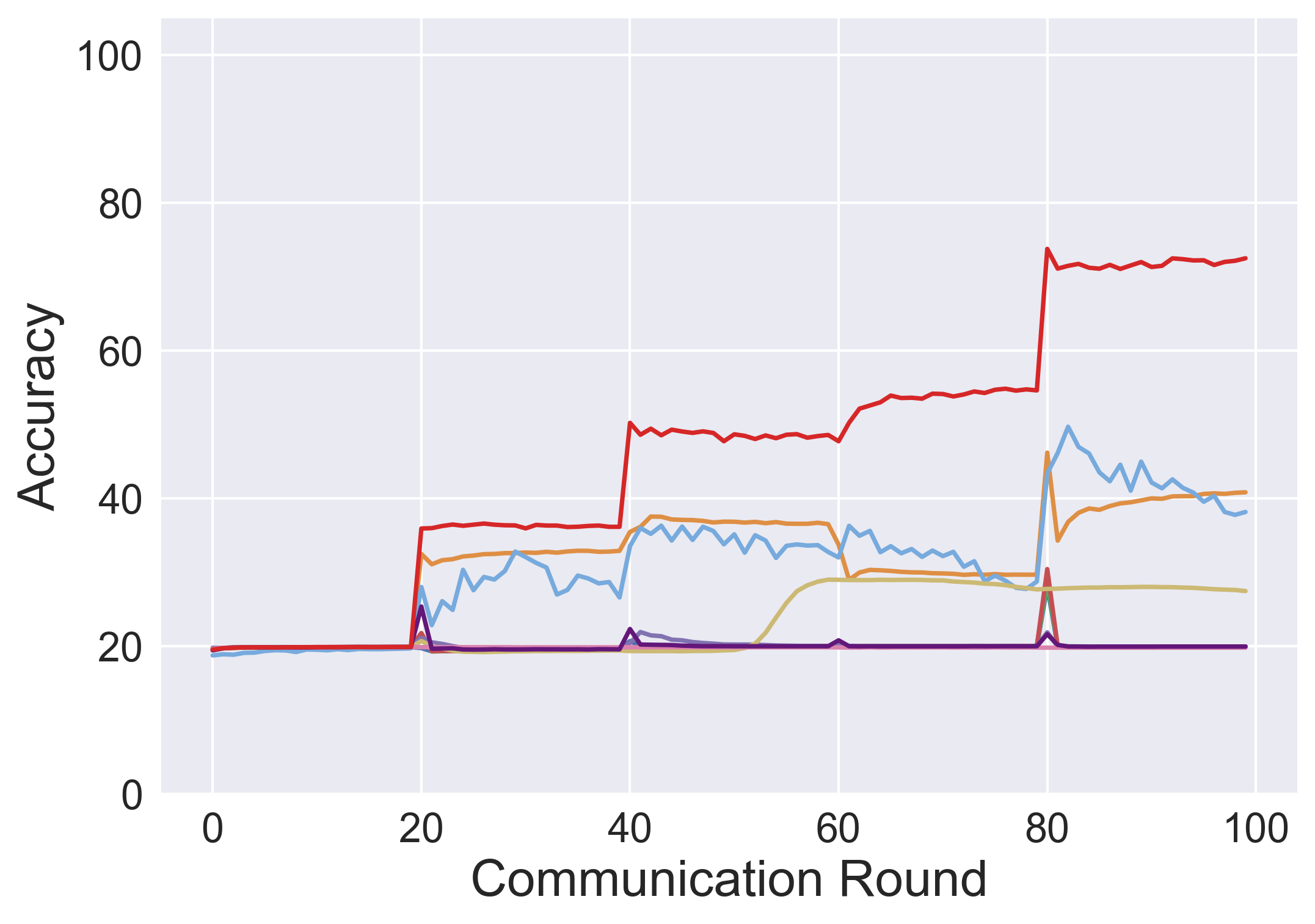}} &
\raisebox{-0.5\height}{\includegraphics[width=0.3\linewidth]{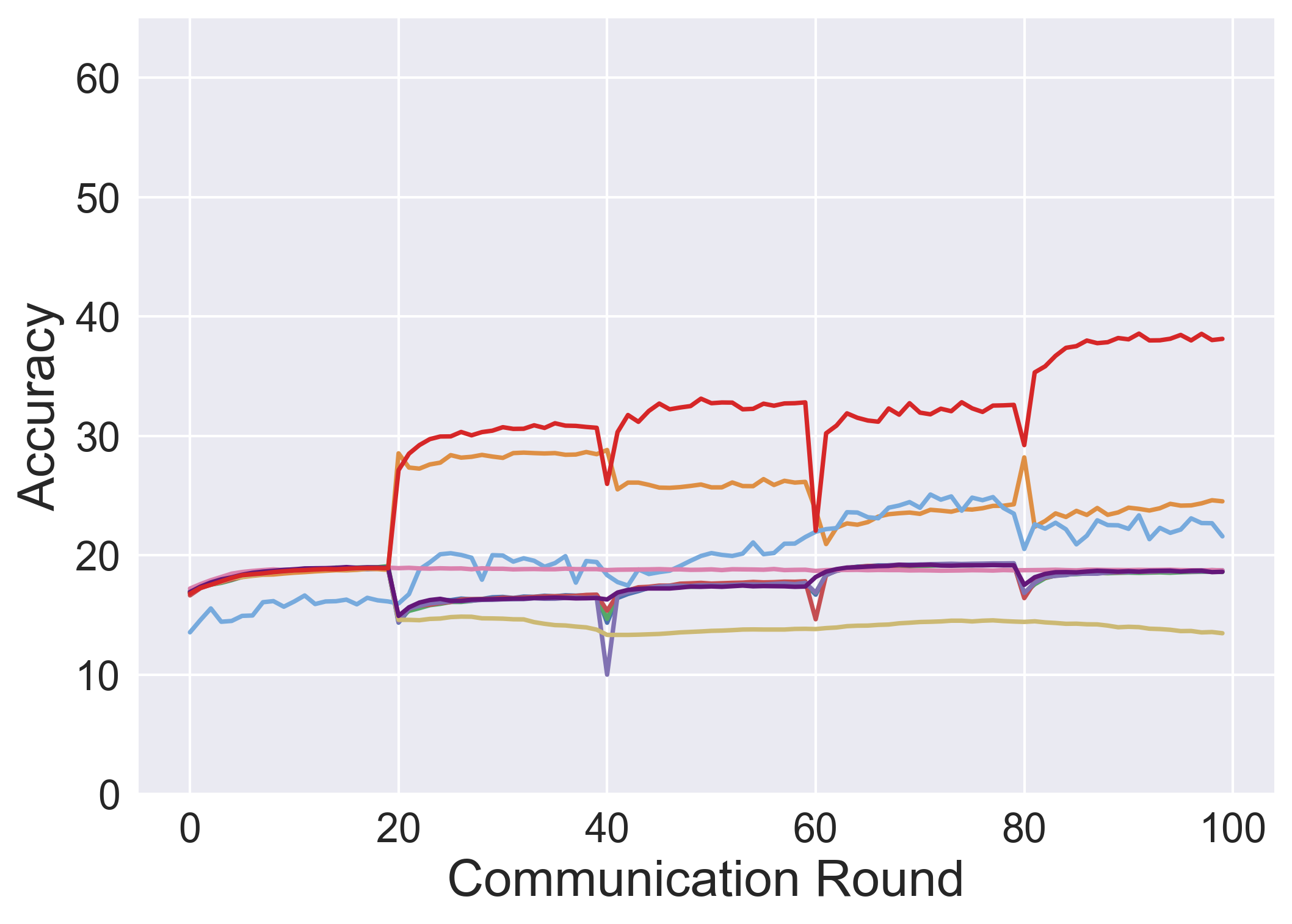}} \\
\rotatebox[origin=c]{90}{\textbf{Class Incremental Non-IID}} &
\raisebox{-0.5\height}{\includegraphics[width=0.3\linewidth]{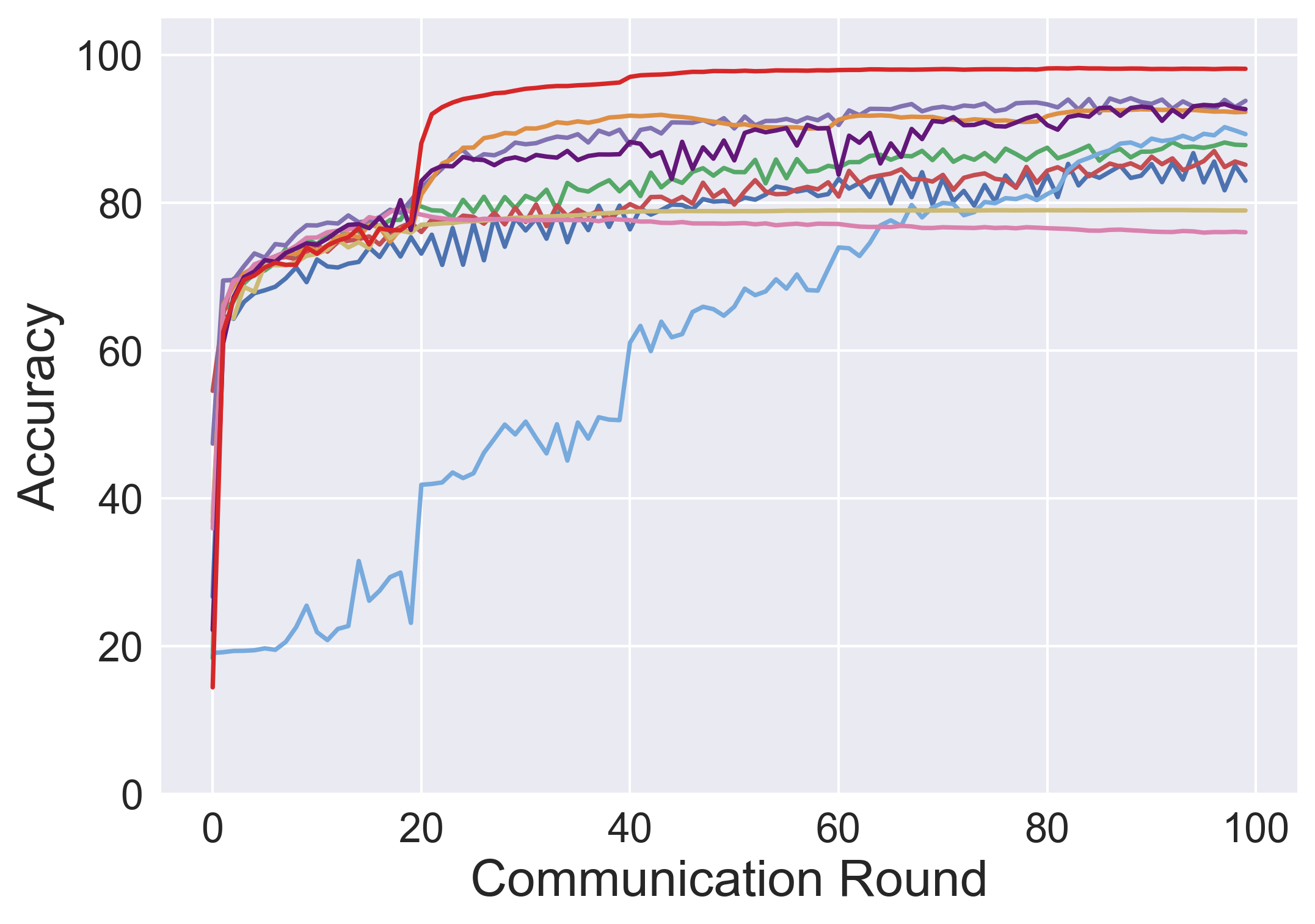}} &
\raisebox{-0.5\height}{\includegraphics[width=0.3\linewidth]{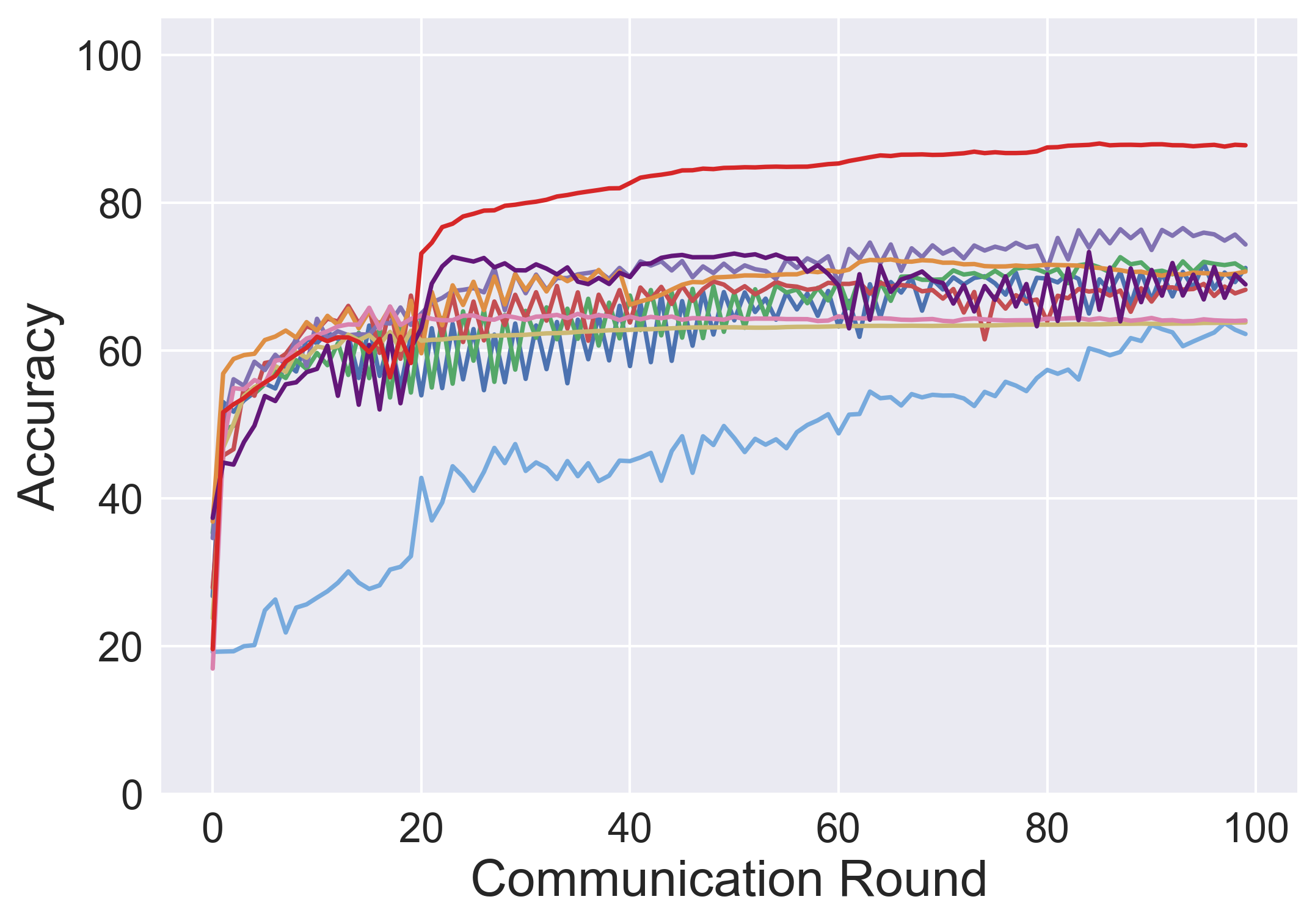}} &
\raisebox{-0.5\height}{\includegraphics[width=0.3\linewidth]{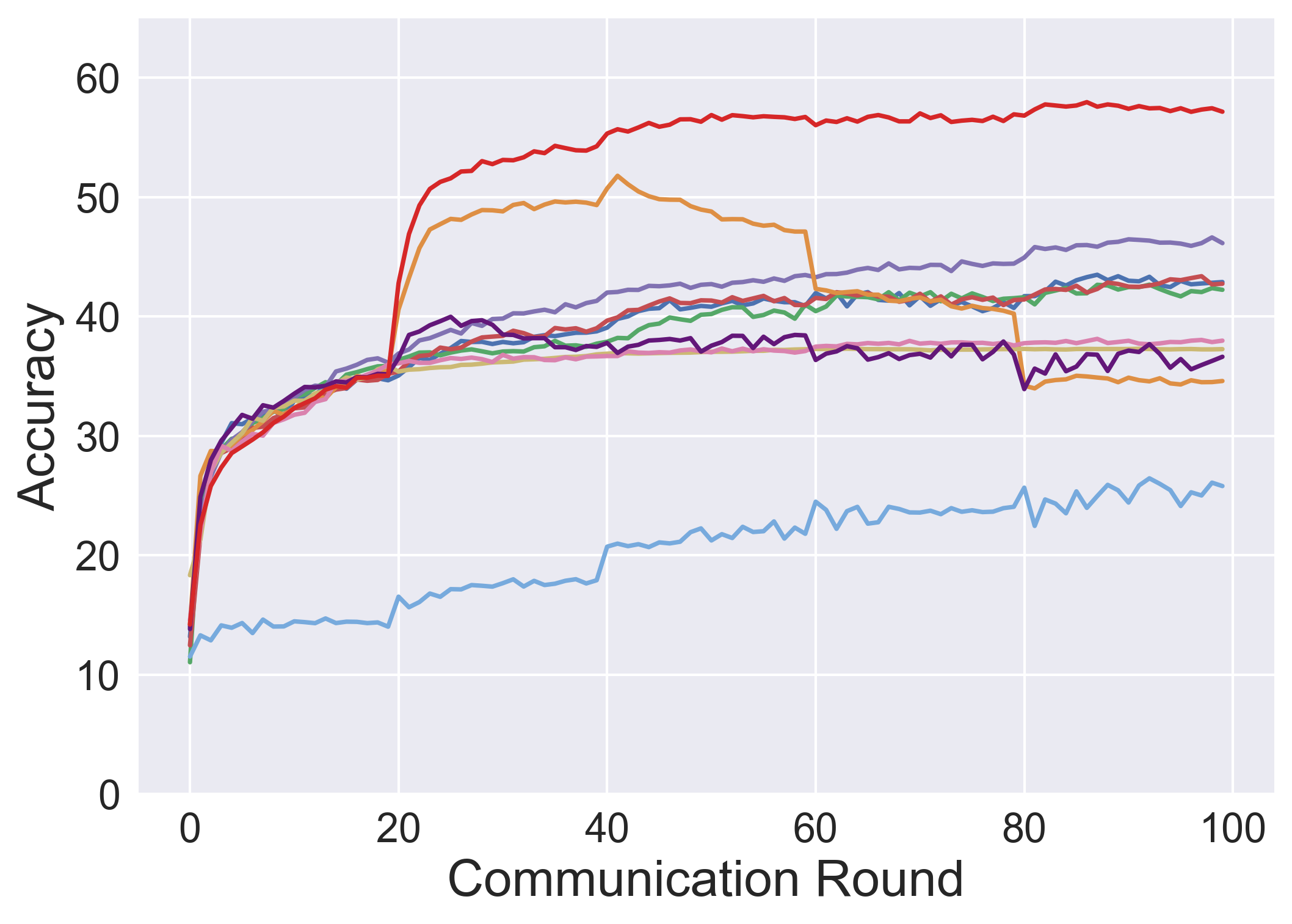}}
\end{tabular}
\caption{\textbf{Main Result} - Comparison of Model Convergence with Baselines. Refer to Figure \ref{Fig. Domain Incremental} for the Domain Incremental scenario.}
\label{Fig. Experimental results} 
\end{figure}

Table \ref{Table Main Result} shows that, compared to the baselines, our \NAME framework demonstrates significant advantages across all three datasets in both the Class Incremental IID and Class Incremental Non-IID scenarios. Specifically, the proposed approach achieves improvements of $32.61 \pm 15.91\%$ in the Class Incremental IID scenario (compared to the best baseline, FedAvg+ACGAN), $15.16 \pm 6.97\%$ in the Class Incremental Non-IID scenario (compared to the best baseline, FedAvg+EWC), and $7.45\%$ in the Domain Incremental scenario (compared to the best baseline, FedAvg+ACGAN). This shows that our approach is better at overcoming catastrophic forgetting than the baselines. Unlike FedAvg+ACGAN, which also relies on a generative model for replay, our use of a diffusion model generates higher-quality synthetic images, thereby avoiding noise contamination in the training dataset. Furthermore, the SOTA baselines suffer from their inherent limitations: FedCIL experiences unstable convergence due to multiple loss functions, while FOT demonstrates slow convergence, rendering it impractical. Meanwhile, both MFCL and TARGET produce low-quality synthetic images as the generative model is trained solely on the basis of knowledge distillation.

\newcommand{\com}[1]{\tiny \ \textcolor{Red}{$\downarrow$}#1}

\begin{table}[h]
\scriptsize
\caption{\textbf{Main Result} - Comparison of Final Accuracy with Baselines.} 
\label{Table Main Result}
\begin{tabular}{@{}lccccccc@{}}
\toprule
\textbf{Scenario} & \multicolumn{3}{c}{\textbf{Class Incremental IID}} & \multicolumn{3}{c}{\textbf{Class Incremental Non-IID}} & \textbf{Domain Incremental} \\
\cmidrule(lr){2-4} \cmidrule(lr){5-7} \cmidrule(lr){8-8}
\textbf{Dataset} & MNIST & FMNIST & CIFAR-10 & MNIST & FMNIST & CIFAR-10 & PACS \\

\midrule

FedAvg \cite{mcmahan2017communication} & 19.77\com{74.92} & 19.96\com{52.54} & 18.74\com{19.39} & 82.99\com{15.15} & 71.19\com{16.60} & 42.89\com{14.27} & 29.96\com{18.06} \\
FedProx \cite{li2020federated} & 19.78\com{74.91} & 19.96\com{52.54} & 18.60\com{19.53} & 87.81\com{10.33} & 70.93\com{16.86} & 42.24\com{14.92} & 33.82\com{14.21} \\
FedAvg+LwF \cite{li2017learning} & 19.77\com{74.92} & 19.96\com{52.54} & 18.67\com{19.46} & 85.16\com{12.98} & 68.24\com{19.55} & 42.75\com{14.41} & 34.57\com{13.46} \\
FedAvg+EWC \cite{kirkpatrick2017overcoming} & 19.78\com{74.91} & 19.95\com{52.55} & 18.64\com{19.49} & 93.81\com{4.33} & 74.37\com{13.42} & 46.15\com{11.01} & 38.02\com{10.01} \\
FedAvg+ACGAN \cite{odena2017conditional} & 42.15\com{52.54} & 40.83\com{31.67} & 24.52\com{13.61} & 92.31\com{5.83} & 70.70\com{17.09} & 34.60\com{22.56} & 40.57\com{7.45} \\
FedCIL \cite{qi2023better} & 46.36\com{48.33} & 38.17\com{34.33} & 21.59\com{16.54} & 89.30\com{8.84} & 62.25\com{25.54} & 25.80\com{31.36} & 27.79\com{20.24} \\
FOT \cite{bakman2023federated} & 32.18\com{62.51} & 27.47\com{45.03} & 13.47\com{24.66} & 78.97\com{19.17} & 63.82\com{23.97} & 37.27\com{19.89} & 39.42\com{8.60} \\
MFCL \cite{babakniya2024data} & 21.14\com{73.55} & 19.81\com{52.69} & 18.74\com{19.39} & 76.02\com{22.12} & 64.08\com{23.71} & 37.98\com{19.18} & 36.16\com{11.86} \\
TARGET \cite{zhang2023target} & 20.55\com{74.14} & 19.96\com{52.54} & 18.62\com{19.51} & 92.68\com{5.46} & 68.96\com{18.83} & 36.63\com{20.53} & 30.72\com{17.30} \\
\midrule
\NAME (Ours) & \textbf{94.69} & \textbf{72.50} & \textbf{38.13} & \textbf{98.14} & \textbf{87.79} & \textbf{57.16} & \textbf{48.02} \\
\bottomrule
\end{tabular}
\\[2pt] 
\com{ } \scriptsize indicates the accuracy decrease of the baselines compared to our \NAME framework. \hfill FMNIST refers to Fashion-MNIST.

\end{table}

\subsection{Analysis of Scenarios}

In the previous subsection, we compared the accuracy of \NAME with the baselines on the global test set. In this subsection, we discuss the performance within the scenarios. Specifically, we demonstrate the results using the standard CL evaluation method in Appendix \ref{appendix:Analysis}, considering only the classes or domains encountered so far, following the settings in CL. The Class Incremental Non-IID scenario is unsuitable for CL evaluation because it includes all classes at any given time.

\textbf{Class Incremental IID} is the most challenging CFL scenario because the server cannot aggregate global knowledge, as all clients only have the same subset of classes at any given time. Due to this characteristic, catastrophic forgetting is particularly severe. Figure \ref{Fig. Experimental results} shows that most traditional baselines completely fail to retain previous knowledge because all knowledge appears only in a single session. In this scenario, replay-based approaches are the best. This is because generative models can replay very old knowledge, whereas other approaches struggle to retain even the knowledge from the previous session. Among these, the diffusion model's high-quality synthetic images avoid error propagation, creating a more positive and effective feedback loop than other generative models like ACGAN. CL analysis of the Class Incremental IID scenario can be found in Appendix \ref{appendix:Analysis Class Incremental IID} and Tables \ref{Table Class Incremental IID 1} and \ref{Table Class Incremental IID 2}.

\textbf{Class Incremental Non-IID} is the simplest (i.e., the least affected by catastrophic forgetting) CFL scenario because the server can aggregate a generalized global model based on the clients with diverse class distributions. So that the clients can continually learn global knowledge of other classes from other clients. As shown in Figure \ref{Fig. Experimental results}, the Class Incremental Non-IID scenario mirrors the convergence process of traditional FedAvg in a Non-IID setting. Even with changes in client data, FedAvg can eventually converge based on its proven methodology. However, our \NAME framework accelerates this process, providing more stability and faster convergence than the baselines. Benefiting from the diffusion model, clients can learn features of multiple classes simultaneously, transforming the Non-IID problem into an IID problem: as clients iterate through communication rounds, they gradually acquire information about global classes through synthetic datasets.

\textbf{Domain Incremental} is a moderately challenging task. Similar to Class Incremental IID, it requires retaining knowledge of all domains without forgetting, as clients will not revisit them. However, we ensure that all clients have access to all classes within each domain, which makes it easier for clients to learn global class knowledge. The diffusion model excels at generating images across different domains and classes, outperforming all baselines. Notably, for Domain Incremental, we consider both domain and class as conditions for synthetic data generation, which is crucial due to the significant differences between domains in PACS that need special consideration. CL analysis of the Domain Incremental scenario can be found in Appendix \ref{appendix:Analysis Domain Incremental} and Figure \ref{Fig. Domain Incremental}, and Table \ref{Table Domain Incremental}.
    \section{Related Work}

\textbf{Federated Learning (FL)} has gained significant attention for its ability to train models across distributed data sources without centralizing data. A plethora of FL frameworks are designed for distributed learning, including asynchronous FL, decentralized FL, and hierarchical FL, among others, as well as numerous studies aimed at addressing the challenges of non-IID data and heterogeneity within FL \cite{pfeiffer2023federated,yuan2024decentralized,zhou2024every,zhou2022federated}. Some research has considered FL in dynamic contexts, such as varying communication conditions, client mobility status, client availability, and resource constraints \cite{liu2023dynamite,zhang2023joint,yuan2024communication,qiao2024br}. However, current research on the issue of dynamically varying client datasets (i.e., CFL) is not comprehensive, and existing solutions are limited to single CFL scenarios \cite{volpi2021continual,dong2022federated,park2024stablefdg}. In this work, we enumerate three CFL scenarios and experimentally demonstrate that \NAME is a universal solution.

\textbf{Continual Learning (CL)} approaches mitigate catastrophic forgetting \cite{nguyen2019toward} issues while sequentially learning new tasks. Among these, some methods introduce a replay memory where the experienced data can be stored \cite{rolnick2019experience,chaudhry2019tiny}. Others include structure-based methods \cite{yoon2017lifelong} and regularization-based approaches \cite{aljundi2018memory,kirkpatrick2017overcoming} to reduce the forgetting. Recently, leveraging generative replay \cite{shin2017continual,wu2018memory,qi2022better} offers a promising avenue for preserving past knowledge while adapting to new tasks, thus addressing the evolving nature of learning scenarios without storing the past data.
However, some traditional CL algorithms, such as Learning without Forgetting (LwF) \cite{li2017learning}, may not be applicable if there is no overlap between the previous and current data distributions. LwF typically assumes that new tasks share commonalities with previous tasks, enabling the model to maintain old knowledge while learning new information. This assumption breaks down when previous tasks' classes (e.g., $\{0,1\}$) are completely different from those of the new tasks (e.g., $\{2,3\}$). Therefore, our proposed diffusion model as replay has been demonstrated to be a universal solution for all different CFL scenarios.

\textbf{Diffusion Models} have showcased superior performance in generating detailed and diverse instances \cite{yang2023diffusion} and succeeded in many areas, including computer vision \cite{mei2023exploiting,weng2024video} and reinforcement learning \cite{fang2024learning,chen2024deep}. There exist three main formulations of diffusion models: Score-based Generative Models (SGM)~\cite{song2019generative,song2020improved}, Denoised Diffusion Probabilistic Models (DDPM)~\cite{sohl2015deep,ho2020denoising,nichol2021improved}, and Stochastic Differential Equations (Score SDE)~\cite{song2020score,song2021maximum}. On this basis, \cite{dhariwal2021diffusion} shows the superiority of the diffusion model compared to other generative models via training a classifier on noisy images and using gradients to guide the diffusion sampling process to the conditioning information, such as labels, by altering the noise prediction. Besides, \cite{ho2021classifierfree} also shows the possibility of running conditional diffusion without an independent classifier. In this work, we exploit a conditional diffusion model in our CFL framework for historical data recovery.

\section{Conclusion}

In this paper, we introduce \NAME, a novel Continual Federated Learning (CFL) framework that incorporates diffusion models for synthetic historical data generation. The synthetic data generated by \NAME helps in retaining memory of previously encountered input data distributions and mitigates the impact of data distribution shifts during learning, thus avoiding latent catastrophic forgetting issues. Theoretical analyses are provided to support the convergence of our proposed framework. Furthermore, experimental results on multiple datasets showcase the effectiveness of \NAME in addressing FL tasks and mitigating the negative impact of input distribution shifts. Currently, as one limitation, we have not explored utilizing multimodality data, such as text prompts, for enhanced data synthesis in CFL tasks, which can be considered as future work.
    
    \newpage
    \bibliography{reference.bib}

\begin{thebibliography}{10}

\bibitem{mcmahan2017communication}
H~Brendan McMahan, Eider Moore, Daniel Ramage, Seth Hampson, and
  Blaise~Ag{\"u}era y~Arcas.
\newblock Communication-efficient learning of deep networks from decentralized
  data.
\newblock In {\em Artificial Intelligence and Statistics}, pages 1273--1282,
  2017.

\bibitem{kairouz2021advances}
Peter Kairouz, H~Brendan McMahan, Brendan Avent, Aur{\'e}lien Bellet, Mehdi
  Bennis, Arjun~Nitin Bhagoji, Kallista Bonawitz, Zachary Charles, Graham
  Cormode, Rachel Cummings, et~al.
\newblock Advances and open problems in federated learning.
\newblock {\em Foundations and trends{\textregistered} in machine learning},
  14(1--2):1--210, 2021.

\bibitem{ding2022federated}
Jie Ding, Eric Tramel, Anit~Kumar Sahu, Shuang Wu, Salman Avestimehr, and Tao
  Zhang.
\newblock Federated learning challenges and opportunities: An outlook.
\newblock In {\em ICASSP 2022-2022 IEEE International Conference on Acoustics,
  Speech and Signal Processing (ICASSP)}, pages 8752--8756. IEEE, 2022.

\bibitem{chahoud2023demand}
Mario Chahoud, Hani Sami, Azzam Mourad, Safa Otoum, Hadi Otrok, Jamal Bentahar,
  and Mohsen Guizani.
\newblock On-demand-fl: A dynamic and efficient multi-criteria federated
  learning client deployment scheme.
\newblock {\em IEEE Internet of Things Journal}, 2023.

\bibitem{kim2021dynamic}
Yeongwoo Kim, Ezeddin Al~Hakim, Johan Haraldson, Henrik Eriksson, Jos{\'e}
  Mairton~B da~Silva, and Carlo Fischione.
\newblock Dynamic clustering in federated learning.
\newblock In {\em ICC 2021-IEEE International Conference on Communications},
  pages 1--6. IEEE, 2021.

\bibitem{wang2024comprehensive}
Liyuan Wang, Xingxing Zhang, Hang Su, and Jun Zhu.
\newblock A comprehensive survey of continual learning: Theory, method and
  application.
\newblock {\em IEEE Transactions on Pattern Analysis and Machine Intelligence},
  2024.

\bibitem{van2022three}
Gido~M Van~de Ven, Tinne Tuytelaars, and Andreas~S Tolias.
\newblock Three types of incremental learning.
\newblock {\em Nature Machine Intelligence}, 4(12):1185--1197, 2022.

\bibitem{yuan2023peer}
Liangqi Yuan, Yunsheng Ma, Lu~Su, and Ziran Wang.
\newblock Peer-to-peer federated continual learning for naturalistic driving
  action recognition.
\newblock In {\em Proceedings of the IEEE/CVF Conference on Computer Vision and
  Pattern Recognition}, pages 5249--5258, 2023.

\bibitem{yoon2021federated}
Jaehong Yoon, Wonyong Jeong, Giwoong Lee, Eunho Yang, and Sung~Ju Hwang.
\newblock Federated continual learning with weighted inter-client transfer.
\newblock In {\em International Conference on Machine Learning}, pages
  12073--12086. PMLR, 2021.

\bibitem{qi2022better}
Daiqing Qi, Handong Zhao, and Sheng Li.
\newblock Better generative replay for continual federated learning.
\newblock In {\em The Eleventh International Conference on Learning
  Representations}, 2022.

\bibitem{babakniya2024data}
Sara Babakniya, Zalan Fabian, Chaoyang He, Mahdi Soltanolkotabi, and Salman
  Avestimehr.
\newblock A data-free approach to mitigate catastrophic forgetting in federated
  class incremental learning for vision tasks.
\newblock {\em Advances in Neural Information Processing Systems}, 36, 2024.

\bibitem{zhang2023target}
Jie Zhang, Chen Chen, Weiming Zhuang, and Lingjuan Lyu.
\newblock Target: Federated class-continual learning via exemplar-free
  distillation.
\newblock In {\em Proceedings of the IEEE/CVF International Conference on
  Computer Vision}, pages 4782--4793, 2023.

\bibitem{dong2022federated}
Jiahua Dong, Lixu Wang, Zhen Fang, Gan Sun, Shichao Xu, Xiao Wang, and Qi~Zhu.
\newblock Federated class-incremental learning.
\newblock In {\em Proceedings of the IEEE/CVF conference on computer vision and
  pattern recognition}, pages 10164--10173, 2022.

\bibitem{bakman2023federated}
Yavuz~Faruk Bakman, Duygu~Nur Yaldiz, Yahya~H Ezzeldin, and Salman Avestimehr.
\newblock Federated orthogonal training: Mitigating global catastrophic
  forgetting in continual federated learning.
\newblock {\em arXiv preprint arXiv:2309.01289}, 2023.

\bibitem{srivastava2017veegan}
Akash Srivastava, Lazar Valkov, Chris Russell, Michael~U Gutmann, and Charles
  Sutton.
\newblock Veegan: Reducing mode collapse in gans using implicit variational
  learning.
\newblock {\em Advances in neural information processing systems}, 30, 2017.

\bibitem{ho2020denoising}
Jonathan Ho, Ajay Jain, and Pieter Abbeel.
\newblock Denoising diffusion probabilistic models.
\newblock {\em Advances in neural information processing systems},
  33:6840--6851, 2020.

\bibitem{amit2021segdiff}
Tomer Amit, Tal Shaharbany, Eliya Nachmani, and Lior Wolf.
\newblock Segdiff: Image segmentation with diffusion probabilistic models.
\newblock {\em arXiv preprint arXiv:2112.00390}, 2021.

\bibitem{austin2021structured}
Jacob Austin, Daniel~D Johnson, Jonathan Ho, Daniel Tarlow, and Rianne Van
  Den~Berg.
\newblock Structured denoising diffusion models in discrete state-spaces.
\newblock {\em Advances in Neural Information Processing Systems},
  34:17981--17993, 2021.

\bibitem{avrahami2022blended}
Omri Avrahami, Dani Lischinski, and Ohad Fried.
\newblock Blended diffusion for text-driven editing of natural images.
\newblock In {\em Proceedings of the IEEE/CVF Conference on Computer Vision and
  Pattern Recognition}, pages 18208--18218, 2022.

\bibitem{sohl2015deep}
Jascha Sohl-Dickstein, Eric Weiss, Niru Maheswaranathan, and Surya Ganguli.
\newblock Deep unsupervised learning using nonequilibrium thermodynamics.
\newblock In {\em International conference on machine learning}, pages
  2256--2265. PMLR, 2015.

\bibitem{benton2023linear}
Joe Benton, Valentin De~Bortoli, Arnaud Doucet, and George Deligiannidis.
\newblock Linear convergence bounds for diffusion models via stochastic
  localization.
\newblock {\em arXiv preprint arXiv:2308.03686}, 2023.

\bibitem{li2019convergence}
Xiang Li, Kaixuan Huang, Wenhao Yang, Shusen Wang, and Zhihua Zhang.
\newblock On the convergence of {F}ed{A}vg on non-iid data.
\newblock {\em arXiv preprint arXiv:1907.02189}, 2019.

\bibitem{reisizadeh2020fedpaq}
Amirhossein Reisizadeh, Aryan Mokhtari, Hamed Hassani, Ali Jadbabaie, and
  Ramtin Pedarsani.
\newblock Fedpaq: A communication-efficient federated learning method with
  periodic averaging and quantization.
\newblock In {\em International conference on artificial intelligence and
  statistics}, pages 2021--2031. PMLR, 2020.

\bibitem{chen2023score}
Minshuo Chen, Kaixuan Huang, Tuo Zhao, and Mengdi Wang.
\newblock Score approximation, estimation and distribution recovery of
  diffusion models on low-dimensional data.
\newblock In {\em International Conference on Machine Learning}, pages
  4672--4712. PMLR, 2023.

\bibitem{lecun1998gradient}
Yann LeCun, L{\'e}on Bottou, Yoshua Bengio, and Patrick Haffner.
\newblock Gradient-based learning applied to document recognition.
\newblock {\em Proceedings of the IEEE}, 86(11):2278--2324, 1998.

\bibitem{xiao2017fashion}
Han Xiao, Kashif Rasul, and Roland Vollgraf.
\newblock Fashion-mnist: a novel image dataset for benchmarking machine
  learning algorithms.
\newblock {\em arXiv preprint arXiv:1708.07747}, 2017.

\bibitem{krizhevsky2009learning}
Alex Krizhevsky, Geoffrey Hinton, et~al.
\newblock Learning multiple layers of features from tiny images.
\newblock 2009.

\bibitem{li2017deeper}
Da~Li, Yongxin Yang, Yi-Zhe Song, and Timothy~M Hospedales.
\newblock Deeper, broader and artier domain generalization.
\newblock In {\em Proceedings of the IEEE international conference on computer
  vision}, pages 5542--5550, 2017.

\bibitem{zhao2023does}
Haiyan Zhao, Tianyi Zhou, Guodong Long, Jing Jiang, and Chengqi Zhang.
\newblock Does continual learning equally forget all parameters?
\newblock In {\em International Conference on Machine Learning}, pages
  42280--42303. PMLR, 2023.

\bibitem{li2020federated}
Tian Li, Anit~Kumar Sahu, Manzil Zaheer, Maziar Sanjabi, Ameet Talwalkar, and
  Virginia Smith.
\newblock Federated optimization in heterogeneous networks.
\newblock {\em Proceedings of Machine learning and systems}, 2:429--450, 2020.

\bibitem{li2017learning}
Zhizhong Li and Derek Hoiem.
\newblock Learning without forgetting.
\newblock {\em IEEE transactions on pattern analysis and machine intelligence},
  40(12):2935--2947, 2017.

\bibitem{kirkpatrick2017overcoming}
James Kirkpatrick, Razvan Pascanu, Neil Rabinowitz, Joel Veness, Guillaume
  Desjardins, Andrei~A Rusu, Kieran Milan, John Quan, Tiago Ramalho, Agnieszka
  Grabska-Barwinska, et~al.
\newblock Overcoming catastrophic forgetting in neural networks.
\newblock {\em Proceedings of the national academy of sciences},
  114(13):3521--3526, 2017.

\bibitem{odena2017conditional}
Augustus Odena, Christopher Olah, and Jonathon Shlens.
\newblock Conditional image synthesis with auxiliary classifier gans.
\newblock In {\em International conference on machine learning}, pages
  2642--2651. PMLR, 2017.

\bibitem{qi2023better}
Daiqing Qi, Handong Zhao, and Sheng Li.
\newblock Better generative replay for continual federated learning.
\newblock {\em arXiv preprint arXiv:2302.13001}, 2023.

\bibitem{pfeiffer2023federated}
Kilian Pfeiffer, Martin Rapp, Ramin Khalili, and J{\"o}rg Henkel.
\newblock Federated learning for computationally constrained heterogeneous
  devices: A survey.
\newblock {\em ACM Computing Surveys}, 55(14s):1--27, 2023.

\bibitem{yuan2024decentralized}
Liangqi Yuan, Ziran Wang, Lichao Sun, Philip~S. Yu, and Christopher~G. Brinton.
\newblock Decentralized federated learning: A survey and perspective.
\newblock {\em IEEE Internet of Things Journal}, 11(21):34617 -- 34638, May
  2024.

\bibitem{zhou2024every}
Hanhan Zhou, Tian Lan, Guru~Prasadh Venkataramani, and Wenbo Ding.
\newblock Every parameter matters: Ensuring the convergence of federated
  learning with dynamic heterogeneous models reduction.
\newblock {\em Advances in Neural Information Processing Systems}, 36, 2024.

\bibitem{zhou2022federated}
Hanhan Zhou, Tian Lan, Guru~Prasadh Venkataramani, and Wenbo Ding.
\newblock Federated learning with online adaptive heterogeneous local models.
\newblock In {\em Workshop on Federated Learning: Recent Advances and New
  Challenges (in Conjunction with NeurIPS 2022)}, 2022.

\bibitem{liu2023dynamite}
Weijie Liu, Xiaoxi Zhang, Jingpu Duan, Carlee Joe-Wong, Zhi Zhou, and Xu~Chen.
\newblock Dynamite: Dynamic interplay of mini-batch size and aggregation
  frequency for federated learning with static and streaming dataset.
\newblock {\em IEEE Transactions on Mobile Computing}, 2023.

\bibitem{zhang2023joint}
Tinghao Zhang, Kwok-Yan Lam, Jun Zhao, and Jie Feng.
\newblock Joint device scheduling and bandwidth allocation for federated
  learning over wireless networks.
\newblock {\em IEEE Transactions on Wireless Communications}, 2023.

\bibitem{yuan2024communication}
Liangqi Yuan, Dong-Jun Han, Su~Wang, Devesh Upadhyay, and Christopher~G
  Brinton.
\newblock Communication-efficient multimodal federated learning: Joint modality
  and client selection.
\newblock {\em arXiv preprint arXiv:2401.16685}, 2024.

\bibitem{qiao2024br}
Jing Qiao, Zuyuan Zhang, Sheng Yue, Yuan Yuan, Zhipeng Cai, Xiao Zhang, Ju~Ren,
  and Dongxiao Yu.
\newblock Br-defedrl: Byzantine-robust decentralized federated reinforcement
  learning with fast convergence and communication efficiency.
\newblock In {\em IEEE INFOCOM 2024-IEEE Conference on Computer
  Communications}, pages 141--150. IEEE, 2024.

\bibitem{volpi2021continual}
Riccardo Volpi, Diane Larlus, and Gr{\'e}gory Rogez.
\newblock Continual adaptation of visual representations via domain
  randomization and meta-learning.
\newblock In {\em Proceedings of the IEEE/CVF Conference on Computer Vision and
  Pattern Recognition}, pages 4443--4453, 2021.

\bibitem{park2024stablefdg}
Jungwuk Park, Dong-Jun Han, Jinho Kim, Shiqiang Wang, Christopher Brinton, and
  Jaekyun Moon.
\newblock Stablefdg: Style and attention based learning for federated domain
  generalization.
\newblock {\em Advances in Neural Information Processing Systems}, 36, 2024.

\bibitem{nguyen2019toward}
Cuong~V Nguyen, Alessandro Achille, Michael Lam, Tal Hassner, Vijay Mahadevan,
  and Stefano Soatto.
\newblock Toward understanding catastrophic forgetting in continual learning.
\newblock {\em arXiv preprint arXiv:1908.01091}, 2019.

\bibitem{rolnick2019experience}
David Rolnick, Arun Ahuja, Jonathan Schwarz, Timothy Lillicrap, and Gregory
  Wayne.
\newblock Experience replay for continual learning.
\newblock {\em Advances in neural information processing systems}, 32, 2019.

\bibitem{chaudhry2019tiny}
Arslan Chaudhry, Marcus Rohrbach, Mohamed Elhoseiny, Thalaiyasingam Ajanthan,
  Puneet~K Dokania, Philip~HS Torr, and Marc'Aurelio Ranzato.
\newblock On tiny episodic memories in continual learning.
\newblock {\em arXiv preprint arXiv:1902.10486}, 2019.

\bibitem{yoon2017lifelong}
Jaehong Yoon, Eunho Yang, Jeongtae Lee, and Sung~Ju Hwang.
\newblock Lifelong learning with dynamically expandable networks.
\newblock {\em arXiv preprint arXiv:1708.01547}, 2017.

\bibitem{aljundi2018memory}
Rahaf Aljundi, Francesca Babiloni, Mohamed Elhoseiny, Marcus Rohrbach, and
  Tinne Tuytelaars.
\newblock Memory aware synapses: Learning what (not) to forget.
\newblock In {\em Proceedings of the European conference on computer vision
  (ECCV)}, pages 139--154, 2018.

\bibitem{shin2017continual}
Hanul Shin, Jung~Kwon Lee, Jaehong Kim, and Jiwon Kim.
\newblock Continual learning with deep generative replay.
\newblock {\em Advances in neural information processing systems}, 30, 2017.

\bibitem{wu2018memory}
Chenshen Wu, Luis Herranz, Xialei Liu, Joost Van De~Weijer, Bogdan Raducanu,
  et~al.
\newblock Memory replay gans: Learning to generate new categories without
  forgetting.
\newblock {\em Advances in neural information processing systems}, 31, 2018.

\bibitem{yang2023diffusion}
Ling Yang, Zhilong Zhang, Yang Song, Shenda Hong, Runsheng Xu, Yue Zhao, Wentao
  Zhang, Bin Cui, and Ming-Hsuan Yang.
\newblock Diffusion models: {A} comprehensive survey of methods and
  applications.
\newblock {\em ACM Computing Surveys}, 56(4):1--39, 2023.

\bibitem{mei2023exploiting}
Yongsheng Mei, Guru Venkataramani, and Tian Lan.
\newblock Exploiting partial common information microstructure for multi-modal
  brain tumor segmentation.
\newblock In {\em Workshop on Machine Learning for Multimodal Healthcare Data},
  pages 64--85. Springer, 2023.

\bibitem{weng2024video}
Lilian Weng.
\newblock Diffusion models video generation.
\newblock {\em lilianweng.github.io}, Apr 2024.

\bibitem{fang2024learning}
Zeyu Fang and Tian Lan.
\newblock Learning from random demonstrations: Offline reinforcement learning
  with importance-sampled diffusion models.
\newblock {\em arXiv preprint arXiv:2405.19878}, 2024.

\bibitem{chen2024deep}
Jiayu Chen, Bhargav Ganguly, Yang Xu, Yongsheng Mei, Tian Lan, and Vaneet
  Aggarwal.
\newblock Deep generative models for offline policy learning: Tutorial, survey,
  and perspectives on future directions.
\newblock {\em arXiv preprint arXiv:2402.13777}, 2024.

\bibitem{song2019generative}
Yang Song and Stefano Ermon.
\newblock Generative modeling by estimating gradients of the data distribution.
\newblock {\em Advances in neural information processing systems}, 32, 2019.

\bibitem{song2020improved}
Yang Song and Stefano Ermon.
\newblock Improved techniques for training score-based generative models.
\newblock {\em Advances in neural information processing systems},
  33:12438--12448, 2020.

\bibitem{nichol2021improved}
Alexander~Quinn Nichol and Prafulla Dhariwal.
\newblock Improved denoising diffusion probabilistic models.
\newblock In {\em International Conference on Machine Learning}, pages
  8162--8171. PMLR, 2021.

\bibitem{song2020score}
Yang Song, Jascha Sohl-Dickstein, Diederik~P Kingma, Abhishek Kumar, Stefano
  Ermon, and Ben Poole.
\newblock Score-based generative modeling through stochastic differential
  equations.
\newblock {\em arXiv preprint arXiv:2011.13456}, 2020.

\bibitem{song2021maximum}
Yang Song, Conor Durkan, Iain Murray, and Stefano Ermon.
\newblock Maximum likelihood training of score-based diffusion models.
\newblock {\em Advances in neural information processing systems},
  34:1415--1428, 2021.

\bibitem{dhariwal2021diffusion}
Prafulla Dhariwal and Alexander Nichol.
\newblock Diffusion models beat gans on image synthesis.
\newblock {\em Advances in neural information processing systems},
  34:8780--8794, 2021.

\bibitem{ho2021classifierfree}
Jonathan Ho and Tim Salimans.
\newblock Classifier-free diffusion guidance.
\newblock In {\em NeurIPS 2021 Workshop on Deep Generative Models and
  Downstream Applications}, 2021.

\bibitem{labml}
Nipun~Wijerathne Varuna~Jayasiri.
\newblock labml.ai annotated paper implementations, 2020.

\end{thebibliography}
    \bibliographystyle{unsrt}

    \newpage
\appendix

\section{Proof of Theorem \ref{theorem:data_bound}}
\label{proof:data_bound}

\begin{proof}
	Given the KL divergence, we have:
	\begin{equation}
		\begin{aligned}
			&D_\mathrm{KL}\left( F^*_t(\theta^k_t;\xi^k_t) \parallel F^*_{t+1}(\theta^k_{t+1};\xi^k_{t+1}) \right) \\
			&= \sum F^*_t(\theta^k_t;\xi^k_t) \log \left(\frac{F^*_t(\theta^k_t;\xi^k_t)}{F^*_{t+1}(\theta^k_{t+1};\xi^k_{t+1})}\right) \\
			&= \sum F^*_t(\theta^k_t;\xi^k_t) \frac{1}{2} \log \left[ \frac{F^*_t(\theta^k_t;\xi^k_t)}{(1+\delta)^{-1}\left(F^*_{t+1}(\theta^k_{t+1};\tilde{\xi}^k_{t+1}) + F^*_{t+1}(\theta^k_{t+1};\tilde{\tilde{\xi}}^k_{t+1})\right)} \right]^2 \\
			&\leq \sum F^*_t(\theta^k_t;\xi^k_t) \frac{1}{2} \left(\log  \frac{F^*_t(\theta^k_t;\xi^k_t)}{(1+\delta)^{-1} F^*_{t+1}(\theta^k_{t+1};\tilde{\xi}^k_{t+1})} + \log \frac{F^*_t(\theta^k_t;\xi^k_t)}{(1+\delta)^{-1} F^*_{t+1}(\theta^k_{t+1};\tilde{\tilde{\xi}}^k_{t+1})} \right) \\
			&= \frac{1}{2}\left[ D_\mathrm{KL}\left(F^*_t(\theta^k_t;\xi^k_t) \parallel F^*_{t+1}(\theta^k_{t+1};\tilde{\xi}^k_{t+1})\right) + D_\mathrm{KL}\left( F^*_t(\theta^k_t;\xi^k_t) \parallel F^*_{t+1}(\theta^k_{t+1};\tilde{\tilde{\xi}}^k_{t+1}) \right) \right] \\
			&\stackrel{(a)}{=} \frac{1}{2}\left[ D_\mathrm{KL}\left( F^*_t(\theta^k_t;\xi^k_t) \parallel F^*_{t+1}(\theta^k_{t+1};\tilde{\tilde{\xi}}^k_{t+1}) \right) + \Delta_t \right],
		\end{aligned}
	\end{equation}
	where $(a)$ adopts Assumption~\ref{assumption:shift_bound}. 
 
    This concludes the proof.
\end{proof}

\section{Proof of Theorem \ref{theorem:cfl_bound}}
\label{proof:cfl_bound}

\begin{proof}
    Considering the generated data at step $t$ will be the input data of step $t + 1$, we have the iterative measurement of the bound regarding this data distribution shift. According to Theorem \ref{theorem:data_bound} and \eqref{eq:distribution}, we have:
    \begin{equation}
        \begin{aligned}
            &D_\mathrm{KL}\left( F^*_{t}(\theta^k_{t};\xi^k_{t}) \parallel F^*_{t+1}(\theta^k_{t+1};\xi^k_{t+1}) \right) \\
            &\leq \frac{1}{2}\left[ D_\mathrm{KL}\left( F^*_{t}(\theta^k_{t};\xi^k_{t}) \parallel F^*_{t+1}(\theta^k_{t+1};\tilde{\tilde{\xi}}^k_{t+1}) \right) + \Delta_t \right] \\
            &= \frac{1}{2}\left[ D_\mathrm{KL}\left( F^*_{t}(\theta^k_{t};\xi^k_{t}) \parallel F^*_{t+1}(\theta^k_{t+1};\tilde{\tilde{\xi}}^k_{t+1}) \right) + \frac{1}{2}\left[ D_\mathrm{KL}\left( F^*_{t+1}(\theta^k_{t+1};\xi^k_{t+1}) \parallel F^*_{t+2}(\theta^k_{t+2};\tilde{\tilde{\xi}}^k_{t+2}) \right) + \Delta_{t+1} \right] \right] \\
            &\cdots \\
            &\stackrel{(a)}{\leq} \left(1 - \frac{1}{2^T}\right) \left(\epsilon^2_{\rm score} + \kappa^2 dM + \kappa d\bar{T} + d\exp(-2\bar{T}) \right) + \frac{1}{2^T}\Delta_T,
        \end{aligned}
        \label{eq:iterative}
    \end{equation}
    where we further simplify the results at the last inequality (a) using the summation of the geometric series and Lemma \ref{lemma:diffusion_bound}.

    We use the FedAvg model as the backbone for federated learning and the diffusion model to generate synthetic data for continual learning. The convergence of the system takes into account the inevitable shift in distribution, which can be measured using KL divergence. This distribution shift occurs at each iteration from $t=1$ to $T$, nested as shown in \eqref{eq:iterative}. Initially, the convergence is determined by:
    \begin{equation}
        \begin{aligned}
            \E \left[ F(\theta_T)\right] - F^*_{T} \leq& \E \left[ F(\theta_T)\right] + D_\mathrm{KL}(F^*_{1} \parallel F^*_{2}) - F^*_T.
        \end{aligned}
    \end{equation}

    \Eqref{eq:iterative} has provided us with an upper bound about the introduced data distribution shift ranging from $t=1,\dots,T$. Considering this in the designed CFL system, the final convergence bound can be derived as:
    \begin{equation}
        \begin{aligned}
            \E \left[ F(\theta_T)\right] - F^*_{T} &\leq A + D_\mathrm{KL}(F^*_{1} \parallel F^*_{2}) \\
            &\leq A + \left(1 - \frac{1}{2^T}\right) \left(\epsilon^2_{\rm score} + \kappa^2 dM + \kappa d\bar{T} + d\exp(-2\bar{T}) \right) + \frac{1}{2^T}\Delta_T.
        \end{aligned}
    \end{equation}
    where $A \triangleq \frac{\kappa}{\gamma +T-1} \left( \frac{2B}{\mu} + \frac{\mu \gamma}{2} \E \Vert\theta_1 - \theta^*\Vert^2 \right)$ based on Lemma \ref{lemma:fedavg_bound}. This concludes the proof.
\end{proof}

\section{Complete algorithm}
\label{appendix:alg}

We present our complete algorithm combining FL with the diffusion model below as the complete extension of the Algorithm \ref{alg:cfl}. As noted in Algorithm \ref{alg:cfl_comp}, we illustrate the alignment of the diffusion model with local clients. In practice, depending on the task's complexity, we may transition from an unguided diffusion model, such as a common DDPM diffusion model, to a conditioned diffusion model. While the former is sufficient for most simple generation workloads, the latter performs better for tasks requiring complicated synthetic unstructured data generation, such as images with delicate contents. To utilize the conditioned diffusion model, both the data and its label are passed as inputs for model training. This often necessitates thorough model training to achieve precise data generation, demanding significant computational resources. To address this challenge, we can leverage a pre-trained model trained on a large general dataset and fine-tune our diffusion model accordingly based on our purposes.

\begin{algorithm}[th]
    \caption{Proposed \NAME Framework Complete Procedure}
    \label{alg:cfl_comp}
    
        \textbf{Input:} Communication rounds ($T$), client datasets ($\gD^k_t$), \textcolor{Tan}{target model, loss function, and learning rate ($\theta$, $F$, $\eta_\theta$)}, \textcolor{RoyalBlue}{diffusion model, diffusion step, loss function, and learning rate ($\omega$, $N$, $\gL$, $\eta_\omega$)}
    
        \textbf{Output:} Generalized global model ($\theta_T$)
        \vspace{5pt}
    
    \begin{algorithmic}[1]
        \Statex \textbf{Local update} of the $k$-th client:
            \State \textbf{initialize} $\omega_0$
        \For{each round $t=1:T$}
        \If{$t = 0$}
            \State Data includes current real data only: $\gX^k_0=\gD^k_0$
            \ElsIf{$t > 1$}
        \State \textcolor{RoyalBlue}{$\rhd$ Generate synthetic data}
        \State $\gG^k_{t-1,n} \sim \gN(0,\mI)$
        \For{$n=N:1$}
        \State $\vz\sim\gN(0,\mI)$ if $t>1$ else $\vz={\bf 0}$
        \State $\gG^k_{t-1,n-1} = \frac{1}{\sqrt{\alpha_n}}\left(\gG^k_{t-1, n} - \frac{1-\alpha_n}{\sqrt{1-\bar{\alpha}_n}}\epsilon_\omega(\gG^k_{t-1, n},\vy,n)\right) + \sqrt{\beta_n} \vz$, given labels $\vy$
        \EndFor
        \State Obtain synthetic data $\gG^k_{t-1} \equiv \gG^k_{t-1, 0}$

        \State Combine real and synthetic data with a scale factor $\delta$: $\gX^k_t=\gD^k_t \cup \delta\cdot \{\gG^k_{t-1}, \vy\}$
        \EndIf

        \Statex \hfill  -- $\cdot$ -- $\cdot$ -- $\cdot$ -- $\cdot$ -- $\uparrow$ Generating synthetic data $\uparrow$ -- $\cdot$ -- $\cdot$ -- $\cdot$ -- $\cdot$ -- $\downarrow$ Training models $\downarrow$ -- $\cdot$ -- $\cdot$ -- $\cdot$ -- $\cdot$ -- \hfill 

        \Repeat {} $E_\theta$ epochs
        \State $\theta^k_{t} \leftarrow \theta^k_{t-1} - \eta_\theta\nabla F_k(\theta^k_{t-1};\gX^k_t)$ \hfill \textcolor{Tan}{$\rhd$ Train target model}
        \Until{$\theta$ converged}
        \Repeat {} $E_\omega$ epochs
        \State $\omega^k_{t} \leftarrow \omega^k_{t-1} - \eta_\omega\nabla \gL_k(\omega^k_{t-1};\gX^k_t, n)$ \hfill \textcolor{RoyalBlue}{$\rhd$ Train diffusion model}
        \Until{$\omega$ converged}
        
        \EndFor
        \State \textbf{return} $\theta^k_{t}$ to server
        \Statex

\end{algorithmic}
\begin{algorithmic}[1]

		\Statex \textbf{Global update} of the server
		\State \textbf{initialize} $\theta_0$
		\For{each round $t=1:T$}
		\For{each client $k=1:K$ \textbf{in parallel}}
		\State $\theta^k_{t} \leftarrow$ $k$-th client's \textit{local update}
		\EndFor
		\State $\theta_t \leftarrow \sum_{k=1}^{K}p_k\theta^k_t$ \hfill \textcolor{Tan}{$\rhd$ Aggregate target models}
		\EndFor

	\end{algorithmic}
\end{algorithm}

\newpage

\section{Datasets}
\label{appendix:Datasets}

We use the following four datasets for our experiments, each applied to different CFL scenarios, as shown in Table \ref{Table Dataset Property}.

\begin{table}[h]
\small
\caption{\textbf{Properties of Datasets and Three CFL Scenarios.}}
\label{Table Dataset Property}
\centering
\begin{tabular}{@{}l|c|c|c@{}}
\toprule
\textbf{Scenario} & \textbf{Class Incremental IID} & \textbf{Class Incremental Non-IID} & \textbf{Domain Incremental} \\
\midrule
\multirow{3}{*}{\textbf{Dataset}} & MNIST & MNIST & PACS \\
& Fashion-MNIST & Fashion-MNIST &  \\
& CIFAR-10 & CIFAR-10 &  \\
\midrule
\multirow{2}{*}{\textbf{Attribute}} & Client 0:$ \{0, 1\} \rightarrow \{2, 3\}$ & Client 0:$ \{0, 1\} \rightarrow \{2, 3\}$ & Client 0: Domain 0 $\rightarrow$ 1\\
& Client 1:$ \{0, 1\} \rightarrow \{2, 3\}$ & Client 1:$ \{2, 3\} \rightarrow \{4, 5\}$ & Client 1: Domain 0 $\rightarrow$ 1\\
\midrule
\textbf{Total Rounds $T$} & 100 & 100 & 80 \\
\midrule
\textbf{Session/Domain} & 5 & 5 & 4 \\
\midrule
\textbf{Number of Classes} & All Clients: 2 & All Clients: 10 & All Clients: 7 \\
\textbf{at Any Time} & Each Client: 2 & Each Client: 2 & Each Client: 7 \\

\bottomrule
\end{tabular}
\end{table}

\begin{itemize}[leftmargin=*]
    \item \textbf{Class-Incremental IID} and \textbf{Class-Incremental Non-IID.}
    \begin{itemize}
        \item \textbf{MNIST} \cite{lecun1998gradient} is a large dataset for handwritten digit recognition, containing 70,000 grayscale images of size $28\times28$ pixels, representing digits from 0 to 9. To accommodate the input dimensions of the UNet in our diffusion model, we resize the images to $32\times32$ pixels. We split the MNIST dataset into 20 clients, with each client having 5 sessions and each session containing 2 classes. Therefore, each session in every client has 300 samples for any given class.
        \item \textbf{Fashion-MNIST} \cite{xiao2017fashion} is a large dataset for clothing recognition, similar in size and format to MNIST, with 70,000 grayscale images of size $28\times28$ pixels. It includes categories such as T-shirt, Trouser, Pullover, Dress, Coat, Sandal, Shirt, Sneaker, Bag, and Ankle boot. Compared to MNIST, the images in Fashion-MNIST have more complex shapes and textures, increasing the difficulty of recognition. We adopt the same loading, partitioning, and preprocessing procedures as with MNIST.
        \item \textbf{CIFAR-10} \cite{krizhevsky2009learning} is a large dataset for object recognition, consisting of 60,000 color images of size $32\times32$ pixels, each with three RGB color channels. The dataset includes objects such as Airplane, Automobile, Bird, Cat, Deer, Dog, Frog, Horse, Ship, and Truck. Compared to MNIST and Fashion-MNIST, CIFAR-10 poses a greater challenge due to the complexity and diversity of backgrounds and objects. We split the CIFAR-10 dataset into 10 clients, with each client having 5 sessions and each session containing 2 classes. Therefore, each session in every client has 500 samples for any given class.
    \end{itemize}
    \item \textbf{Domain-Incremental.}
    \begin{itemize}
        \item \textbf{PACS} \cite{li2017deeper} is a widely used dataset for domain adaptation and generalization studies, featuring 4 significantly different visual domains: Photo, Art Painting, Cartoon, and Sketch. It contains a total of 9,991 color images of size 227x227 pixels, with each domain comprising: Photo (1,670 images), Art Painting (2,048 images), Cartoon (2,344 images), and Sketch (3,929 images). Each domain includes seven categories: Dog, Elephant, Giraffe, Guitar, Horse, House, Person. We use the PACS dataset for domain-incremental learning, ensuring that each client retains the same categories across different domains (the same as the IID setting in FedAvg). The entire PACS dataset is split into 80\% training and 20\% testing sets, with the training set further divided among clients according to the domain-incremental setting. We split the PACS dataset into 10 clients, with each client having 4 domains, each containing all classes. Consequently, the number of samples per domain in each client is approximately 312, 184, 161, and 131, respectively. Note that the sample sizes for each domain in PACS are different; and these are approximate values because we randomly split the training and testing sets in an 8:2 ratio, and different seeds result in varying sample sizes (we ensure that the number of samples per class is consistent within each client). These sample sizes include all classes, but the class distribution differs across domains, as shown in Figure \ref{Fig. PACS_Distribution}.
    \end{itemize}
\end{itemize}

\begin{figure}[h]
    \centering
    \includegraphics[width=0.6\linewidth]{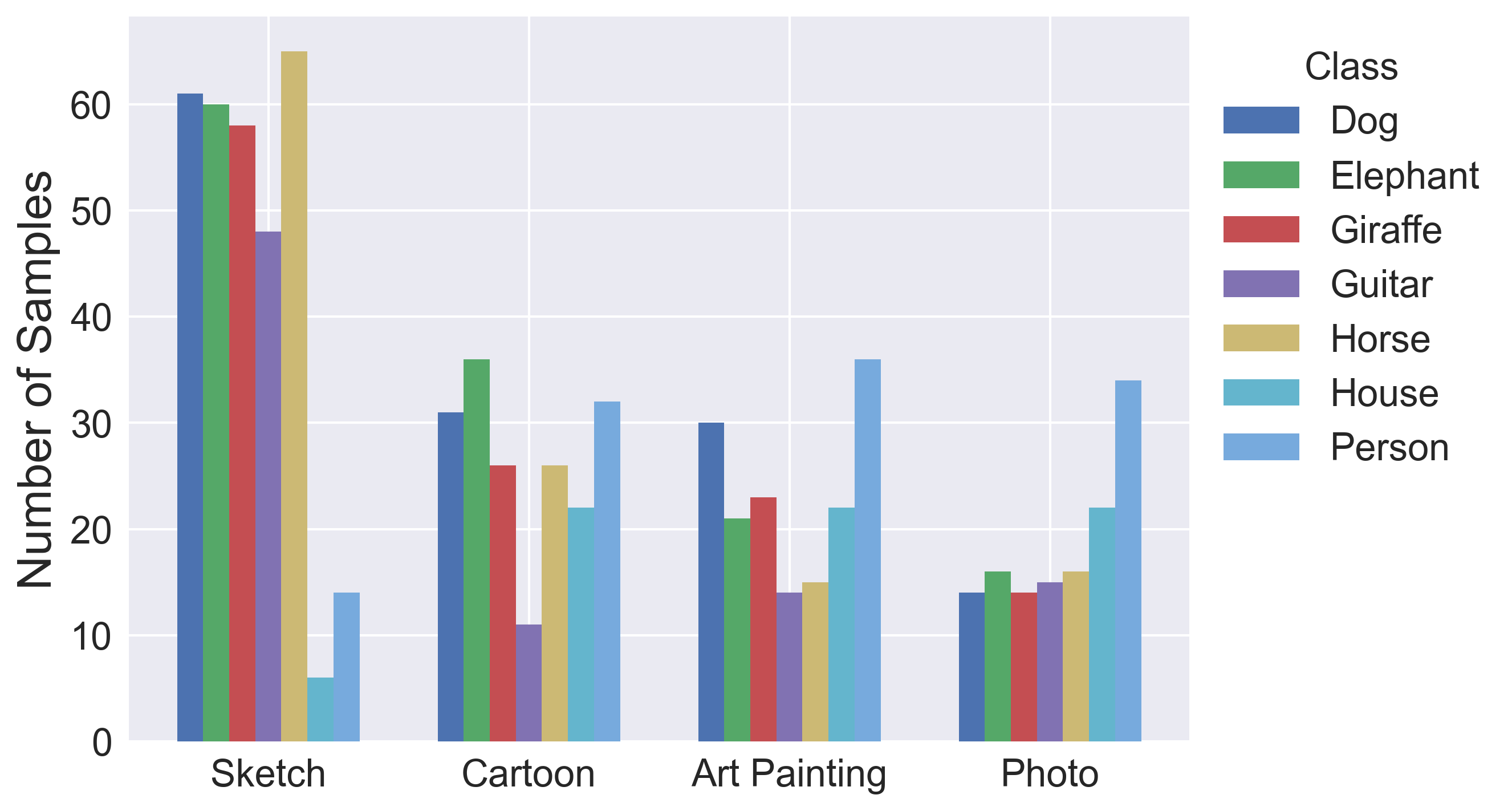}
    \caption{\textbf{Data Distribution of PACS Dataset.} The figure shows the number of samples owned by each one single client.}
    \label{Fig. PACS_Distribution}
\end{figure}

\section{Implementation Details}
\label{appendix:Implementation}

\textbf{Training Details.} For all experiments, both the target model and the diffusion model use the Adam optimizer with a learning rate of 1e-4 and a batch size of 32. For the target model, the training epochs are set to $E_\theta=5$. For the diffusion model, the training epochs are set according to the complexity of the images: $E_\omega=100$ for MNIST and Fashion-MNIST, and $E_\omega=1000$ for CIFAR-10 and PACS. All models are implemented in PyTorch and trained on an NVIDIA A100 GPU with 40 GB of memory. Using MNIST in the Class Incremental IID scenario as an example, the execution time for a single client is approximately 0.5 seconds for training the target model, 180 seconds for training the diffusion model, and 200 seconds for generating synthetic samples.

\textbf{Target Model - CNN Architecture.} We use a CNN as the target model for MNIST, Fashion-MNIST, CIFAR-10, and PACS. The architecture consists of two convolutional layers, the first with 32 filters and the second with 64 filters, both using a kernel size of 5 and padding of 2. Each convolutional layer is followed by a ReLU activation and a max pooling layer with a pool size of 2. The output from the convolutional layers is then flattened and passed through a fully connected layer with 512 units, followed by another ReLU activation. The final layer is a fully connected layer with 10 units (or 7 units for PACS), corresponding to the number of classes, and a log-softmax activation function.

\textbf{Diffusion Model - Conditional UNet Architecture.} We use a conditional UNet as the backbone for the diffusion model in our framework. The architecture starts with an initial convolutional layer that projects the input image into a higher-dimensional space with 64 channels. It then processes the data through a series of downsampling and upsampling blocks. The downsampling path consists of four blocks with channel sizes of 64, 128, 128, and 256, respectively. Each block contains residual blocks and, in the deepest layer, attention mechanisms to enhance feature representation. Each residual block includes group normalization and dropout, ensuring stable training and preventing overfitting. In the middle of the network, a middle block contains both residual and attention mechanisms, maintaining the channel size at 256. The upsampling path mirrors the downsampling path but in reverse order, reducing the channel dimensions while merging features from corresponding downsampling layers through skip connections. This upsampling process includes upsampling layers to expand the spatial dimensions back to the original size. Finally, the model normalizes and activates the features before passing them through a final convolutional layer, which reduces the output to the desired number of image channels. We incorporate time and condition information (class for MNIST, Fashion-MNIST, and CIFAR-10; class and domain for PACS) into the model, where the condition dimension is set to 32 for MNIST, Fashion-MNIST, and CIFAR-10, and 64 for PACS. The original implementations of DDPM and UNet are sourced from labml\_nn library \cite{labml}.

\newpage

\section{Baselines}
\label{appendix:Baselines}

We consider four types of baseline for comparison.

\begin{itemize}[leftmargin=*]
    \item \textbf{FL Algorithms.}
    \begin{itemize}
        \item \textbf{FedAvg} \cite{mcmahan2017communication} is the most representative and classic FL algorithm, where the server aggregates the received client models weighted by the number of client samples to obtain the global model.
        \item \textbf{FedProx} \cite{li2020federated} is popular FL algorithm. Built on FedAvg, it introduces a proximal term in the loss function during local training to reduce the divergence between the local and global models, addressing client heterogeneity. We did not adapt this proximal term loss to the CFL scenario, but only compute the difference between the global model at round $t-1$ and the local model at round $t$. The proximal term loss function is $\frac{\mu}{2} \| \theta^k_t - \theta_{t-1}\|^2$. We set the weight parameter $\mu=1$ for the proximal term as in the original paper.
    \end{itemize}

    \item \textbf{FL with Classical CL Methods.}
    \begin{itemize}
        \item \textbf{FedAvg+ \quad LwF} (Learning without Forgetting) \cite{li2017learning} is a classic CL algorithm that utilizes knowledge distillation, requiring the new model to mimic the output of the old model on the same input when training on new tasks. We adapt LwF to the CFL scenario, where, in session $s$, the student model is guided by the teacher model from the last communication round of session $s-1$. In which the last communication round of the previous session can be calculated as $t^\ast = \left\lfloor \frac{t}{T/S} \right\rfloor \times \frac{T}{S}$, where $T$ is the total number of rounds, and $S$ is the number of sessions. The teacher model $\theta_{t^\ast}$ guides all training in the next session. The LwF loss function is expressed as $\mathcal{L}_{\mathrm{LwF}} = \mathcal{L}_{\mathrm{task}} + \lambda_{\mathrm{LwF}} D_\mathrm{KL}\left(\theta_{t^\ast}(\mathbf{x}) \| \theta_t(\mathbf{x})\right)$, where $D_\mathrm{KL}$ denotes the Kullback-Leibler (KL) divergence, and $\lambda_{\mathrm{LwF}}$ is a weighting factor used to balance the influence of previous and current tasks. We set $\lambda_{\mathrm{LwF}}=1$, consistent with the setting in \cite{li2017learning}.
        \item \textbf{FedAvg+ \quad EWC} (Elastic Weight Consolidation) \cite{kirkpatrick2017overcoming} is another classic CL algorithm that applies additional constraints to protect model parameters relevant to old tasks from significant updates. We adapt EWC to the CFL scenario similarly to LwF, using the model from the last communication round of the previous session $\theta_{t^\ast}$ as the old task model to guide all training in the next session. The EWC loss function is $\mathcal{L}_{\mathrm{EWC}} = \mathcal{L}_{\mathrm{task}} + \frac{\lambda_{\mathrm{EWC}}}{2} \sum_{i} F^{t^\ast}_i (\theta^t_i - \theta^{t^\ast}_i)^2$, where $\lambda_{\mathrm{EWC}}$ is the EWC penalty, and $F_i$ are the values of the Fisher information matrix for the parameter $\theta^{t^\ast}_i$. We set $\lambda_{\mathrm{EWC}}=400$, consistent with the setting in \cite{kirkpatrick2017overcoming}.
    \end{itemize}
    \item \textbf{FL with Generative Model as Replay.}
    \begin{itemize}
        \item \textbf{FedAvg+ \quad ACGAN} (Auxiliary Classifier Generative Adversarial Networks) \cite{odena2017conditional} is a widely used conditional generative model. A traditional GAN consists of a generator that creates synthetic images and a discriminator that distinguishes between real and fake samples. ACGAN also extends this by requiring the discriminator to classify the samples, enabling conditional output. For the implementation of FedAvg+ACGAN, we replace the diffusion model in our \NAME framework with ACGAN while keeping the rest of the experimental settings identical, such as training epochs, learning rate, and the number of generated images.
    \end{itemize}
    \item \textbf{SOTA CFL Frameworks.}
    \begin{itemize}
        \item \textbf{FedCIL} \cite{qi2023better} uses ACGAN for generating synthetic data while employing model consolidation to aggregate ACGANs from different clients and consistency enforcement to ensure that local training aligns better with the global model, thereby reducing training bias. However, FedCIL has some significant limitations, including (i) the server requires resources to train a server ACGAN, (ii) clients need to upload their ACGANs, causing privacy leakage, and (iii) it is not applicable to different tasks, such as object detection and semantic segmentation, since FedCIL only considers the discriminator in FL. In contrast, decoupling the generative model from the target model, as done in our \NAME framework and the baseline \textbf{FedAvg+ACGAN}, offers better scalability.
        \item \textbf{FOT} \cite{bakman2023federated} performs global principal subspace extraction to identify features critical to previous tasks, which are then protected during subsequent training to prevent forgetting. Subsequently, through the orthogonal projection aggregation method, when training new tasks, the server orthogonally projects model updates from the clients onto the orthogonal complement of the old tasks' subspace. This ensures that updates occur only in directions unrelated to the old tasks, thereby minimizing the impact of learning new tasks on the performance of previous tasks. Although this approach does not require additional computation on the clients, it does necessitate additional computation on the server, such as orthogonal projection and subspace extraction. Moreover, FOT entails higher communication costs between clients and servers to transfer subspace information and orthogonal projections, raising concerns about potential privacy breaches.
        \item \textbf{MFCL} \cite{babakniya2024data} trains a generative model on the server through knowledge distillation using the aggregated global model and then downloads this generative model to all clients to generate synthetic data and perform local training. MFCL optimizes four loss functions for training the generative model: cross entropy, diversity, batch statistics, and image prior. Although this is a data-free generative model training strategy, the quality of the generated models is suboptimal. While clients incur no additional computational overhead, the server requires substantial computational resources to train the generative model and incurs significant communication overhead to transmit the model.
        \item \textbf{TARGET} \cite{zhang2023target} trains a generative model on the server by knowledge distillation using the aggregated global model and then downloads the synthetic dataset to all clients for local training. TARGET optimizes the generative model using cross-entropy, KL loss with the student model, and batch normalization. The success of local training is heavily dependent on the quality of the synthetic dataset, which is difficult to guarantee. Additionally, the server incurs substantial computational overhead for training the generative model and significant communication overhead for transmitting the synthetic dataset.

    \end{itemize}
\end{itemize}

\section{Continual Learning Analysis of Scenario}
\label{appendix:Analysis}

\subsection{Class Incremental IID}
\label{appendix:Analysis Class Incremental IID}

We demonstrate the accuracy of the model on the \textit{classes encountered so far} during the FL process as the classes increment. Specifically, in our setup, the accuracy for session 1 refers to the accuracy of the global target model in the communication round $T=20$ on a test set that contains only classes $\{0, 1\}$. The accuracy for session 2 refers to the accuracy of the global target model in the communication round $T=40$ on a test set that contains only classes $\{0, 1, 2, 3\}$, and so on. Session 5 represents the accuracy of the final global target model on the complete test set, which is the final accuracy shown in Table \ref{Table Main Result}. This method is a popular way to illustrate Class Incremental CL scenarios, providing a fine-grained view of the model's learning performance at any given time. It is evident that the model cannot accurately classify classes it has never encountered, so showing the accuracy on unseen classes is not meaningful.

As shown in Tables \ref{Table Class Incremental IID 1} and \ref{Table Class Incremental IID 2}, the accuracy varies across three datasets in the Class Incremental scenarios. For session 1, most of the frameworks are essentially vanilla FedAvg, and accuracy differences arise solely from the uncertainties in the training process. In session 2, the CL strategies of all frameworks start to take effect to avoid catastrophic forgetting. FedAvg, FedProx, FedAvg+LwF, and FedAvg+EWC fail to remember the previous data distribution, as they can only correctly classify the current classes $\{2, 3\}$, resulting in a subsequent accuracy drop of about 50\%. By session 3, these frameworks can remember only the current 2 classes out of a total of 6 classes, leading to an accuracy of approximately $\frac{2}{6}$. In contrast, frameworks specifically designed for CFL problems perform significantly better, demonstrating their ability to overcome catastrophic forgetting to varying degrees. Comparatively, the accuracy declines faster for CIFAR-10, indicating that its images are rich in information and complex patterns, making them more prone to forgetting. Therefore, for the CIFAR-10 dataset, more samples per client and additional training epochs are necessary for the model to learn the representations of different classes better.

Note that we focus on the Class Incremental IID scenario because it only has a subset of classes at any given time or session, whereas Class Incremental Non-IID includes all classes. Therefore, the Class Incremental Non-IID scenario does not require fine-grained analysis of the accuracy on currently encountered classes and can be directly tested on the complete test set.

\begin{table}[h]
\small
\caption{\textbf{Analysis of Scenario - Class Incremental IID.} Tested on encountered classes.}
\label{Table Class Incremental IID 1}
\centering
\begin{tabular}{@{}lccccc||ccccc@{}}
\toprule
\multirow{2}{*}{\textbf{Method}} & \multicolumn{5}{c||}{\textbf{MNIST - Session}} & \multicolumn{5}{c}{\textbf{Fashion-MNIST - Session}} \\
\cmidrule(r){2-6} \cmidrule(l){7-11}
& 1 & 2 & 3 & 4 & 5 & 1 & 2 & 3 & 4 & 5 \\
\midrule
FedAvg \cite{mcmahan2017communication} & \textbf{100.00} & 49.07 & 31.07 & 24.90 & 19.77 & 99.40 & 49.00 & 33.33 & 25.00 & 19.96 \\
FedProx \cite{li2020federated} & \textbf{100.00} & 49.03 & 31.07 & 24.77 & 19.78 & 99.45 & 49.10 & 33.33 & 25.00 & 19.96 \\
FedAvg+LwF \cite{li2017learning} & 99.95 & 49.07 & 31.07 & 24.88 & 19.77 & 99.40 & 49.05 & 33.33 & 25.00 & 19.96 \\
FedAvg+EWC \cite{kirkpatrick2017overcoming} & 99.95 & 49.07 & 31.32 & 26.46 & 19.78 & \textbf{99.55} & 49.27 & 33.35 & 25.00 & 19.95 \\
FedAvg+ACGAN \cite{odena2017conditional} & \textbf{100.00} & 87.76 & 61.37 & 52.90 & 42.15 & 99.25 & 82.28 & 60.87 & 37.11 & 40.83 \\
FedCIL \cite{qi2023better} & 99.84 & 81.78 & 73.38 & 66.99 & 46.36 & 98.57 & 66.51 & 54.59 & 35.95 & 38.17 \\
FOT \cite{bakman2023federated} & \textbf{100.00} & 47.53 & 38.27 & 32.76 & 32.18 & 99.45 & 48.58 & 48.35 & 34.61 & 27.47 \\
MFCL \cite{babakniya2024data} & \textbf{100.00} & 49.07 & 33.43 & 26.86 & 20.55 & 99.45 & 48.95 & 33.33 & 25.00 & 19.96 \\
TARGET \cite{zhang2023target} & \textbf{100.00} & 50.88 & 35.07 & 26.38 & 21.15 & 99.40 & 49.65 & 33.10 & 24.77 & 19.81 \\
\midrule
\NAME (Ours) & 99.95 & \textbf{99.62} & \textbf{98.86} & \textbf{97.52} & \textbf{94.69} & \textbf{99.55} & \textbf{90.40} & \textbf{80.98} & \textbf{68.28} & \textbf{72.50} \\
\bottomrule
\end{tabular}
\end{table}

\begin{table}[h]
\centering
\small
\caption{\textbf{Analysis of Scenario - Class Incremental IID.} Tested on encountered classes.}
\label{Table Class Incremental IID 2}
\begin{tabular}{@{}lccccc}
\toprule
\multirow{2}{*}{\textbf{Method}} & \multicolumn{5}{c}{\textbf{CIFAR-10 - Session}} \\
\cmidrule{2-6}
& 1 & 2 & 3 & 4 & 5 \\
\midrule
FedAvg \cite{mcmahan2017communication} & 94.55 & 41.70 & 29.32 & 24.01 & 18.74 \\
FedProx \cite{li2020federated} & \textbf{95.45} & 41.40 & 29.37 & 24.02 & 18.60 \\
FedAvg+LwF \cite{li2017learning} & 94.45 & 41.75 & 29.70 & 24.07 & 18.67 \\
FedAvg+EWC \cite{kirkpatrick2017overcoming} & 94.40 & 41.17 & 29.48 & 24.16 & 18.64 \\
FedAvg+ACGAN \cite{odena2017conditional} & 93.90 & 71.17 & 43.58 & 30.34 & 24.52 \\
FedCIL \cite{qi2023better} & 80.66 & 48.58 & 35.88 & 29.36 & 21.59 \\
FOT \cite{bakman2023federated} & 94.60 & 34.42 & 23.07 & 18.06 & 13.47 \\
MFCL \cite{babakniya2024data} & 94.80 & 41.05 & 28.97 & 23.95 & 18.62 \\
TARGET \cite{zhang2023target} & 94.80 & 47.10 & 31.32 & 23.43 & 18.74 \\
\midrule
\NAME (Ours) & 94.55 & \textbf{76.70} & \textbf{54.68} & \textbf{40.75} & \textbf{38.13} \\
\bottomrule
\end{tabular}
\end{table}

\subsection{Domain Incremental}
\label{appendix:Analysis Domain Incremental}

We show the accuracy of the model on the \textit{domains encountered so far} during the FL process as the domains increment. Unlike in the Class Incremental IID scenario, where the model needs a thorough understanding of different classes for accurate classification, in the Domain Incremental scenario, the model can potentially classify categories even from unseen domains. For example, a model trained on the Cartoon domain might correctly classify images from the Photo domain. Figure \ref{Fig. Domain Incremental} shows the changes in the model accuracy in the complete test dataset, where it can be observed that the model accuracy initially increases and then decreases. This is partly due to the nature of the dataset, where some domains (e.g., Art Painting) are not closely related to others. Additionally, the order of domains can lead to inconsistent accuracy variations. In this paper, we follow the setting from other literature, specifically Sketch $\rightarrow$ Cartoon $\rightarrow$ Art Painting $\rightarrow$ Photo (increasing the level of realism over time) \cite{zhao2023does}. If we consider decreasing the level of realism over time, the accuracy variations would present a different case.

For the above reasons, we should also consider presenting and analyzing the model's performance in the Domain Incremental scenario by testing only on the currently encountered domain, as shown in Table \ref{Table Domain Incremental}. It is evident that for our \NAME framework, there is a significant drop in accuracy from session 2 to session 3. This indicates a substantial difference between the synthetic data generated for session 1 (Sketch) and session 2 (Cartoon) compared to session 3 (Art Painting), leading to confusion in the target model on these data. In contrast, the accuracy drop for baselines occurs sharply from session 1 to session 2, and then slows down in subsequent domain changes. This further confirms the significant differences between domains. However, our \NAME framework can mitigate the impact of these domain differences on the performance of the target model.

\begin{minipage}[h]{0.45\textwidth}
\centering
\includegraphics[width=1\linewidth]{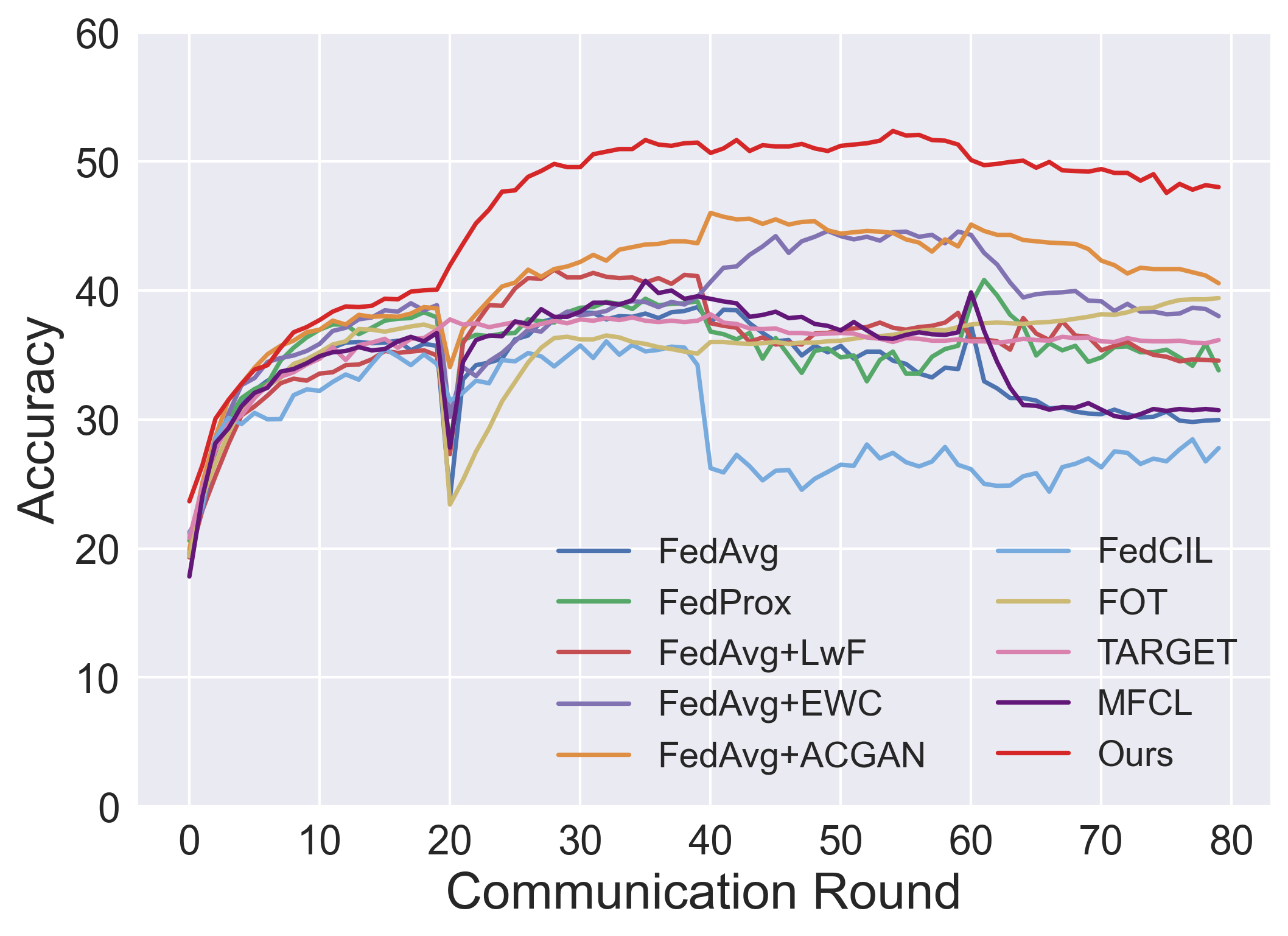}
\captionof{figure}{\textbf{Main Result - Domain Incremental Scenario.} Tested on the complete test set.}
\label{Fig. Domain Incremental}
\end{minipage}
\hfill
\begin{minipage}[h]{0.54\textwidth}
\centering
\scriptsize
\captionof{table}{\textbf{Analysis of Scenario - Domain Incremental.} Tested on encountered classes.}
\label{Table Domain Incremental}
\begin{tabular}{lccccc}
\toprule
\multirow{2}{*}{\textbf{Method}} & \multicolumn{4}{c}{\textbf{PACS - Domain}} \\
\cmidrule{2-5}
& 1 & 2 & 3 & 4 \\
\midrule
FedAvg \cite{mcmahan2017communication} & 63.36 & 45.82 & 33.21 & 29.96 \\
FedProx \cite{li2020federated} & 66.79 & 44.38 & 35.50 & 33.82 \\
FedAvg+LwF \cite{li2017learning} & 64.76 & 46.61 & 36.58 & 34.57 \\
FedAvg+EWC \cite{kirkpatrick2017overcoming} & 66.16 & 44.70 & 42.52 & 38.02 \\
FedAvg+ACGAN \cite{odena2017conditional} & 64.89 & 51.08 & 42.82 & 40.57 \\
FedCIL \cite{qi2023better} & 50.51 & 25.95 & 15.90 & 13.49 \\
FOT \cite{bakman2023federated} & 65.52 & 43.82 & 40.12 & 39.42 \\
MFCL \cite{babakniya2024data} & 66.79 & 53.34 & 41.25 & 36.16 \\
TARGET \cite{zhang2023target} & 65.02 & 44.83 & 35.91 & 30.72 \\
\midrule
\NAME (Ours) & \textbf{69.59} & \textbf{64.70} & \textbf{52.55} & \textbf{48.02} \\
\bottomrule
\end{tabular}
\end{minipage}

\section{Sensitivity Study}
\label{appendix:Sensitivity}

\subsection{Effect of Number of Clients (Sample Size per Client)}
\label{appendix:Sensitivity NumClients}

For simulation datasets such as MNIST, Fashion-MNIST, and CIFAR-10, where increasing the number of clients results in a reduction in the data available to each client. In our setup, with a total of 5 sessions, when the number of clients is set to 20, the sample size per client aligns with that set in FedAvg. However, variations in client numbers change the sample size per client, affecting not only the training of the target model but also the training of the diffusion model. Consequently, we analyze the effect of different numbers of clients on accuracy, as illustrated in Figure \ref{Fig. Sensitivity Study - NumClients}. For synthetic datasets consistent with the main text, we ensure that the sample sizes of the synthetic datasets match those of the real datasets.

The results demonstrate that the \NAME framework performs better when there are fewer clients and each client has a larger sample size, which means the data are more concentrated. This improvement is rationalized by the fact that the diffusion model has access to more training data, resulting in more realistic and less noisy synthetic data. Moreover, since our setup ensures that the synthetic datasets have the same number of samples as the real datasets, fewer clients allow the diffusion model to generate more synthetic data, which in turn aids the training of the target model.

\begin{figure}[h]
    \centering
    \begin{subfigure}[h]{0.49\textwidth}
        \centering
        \includegraphics[width=\textwidth]{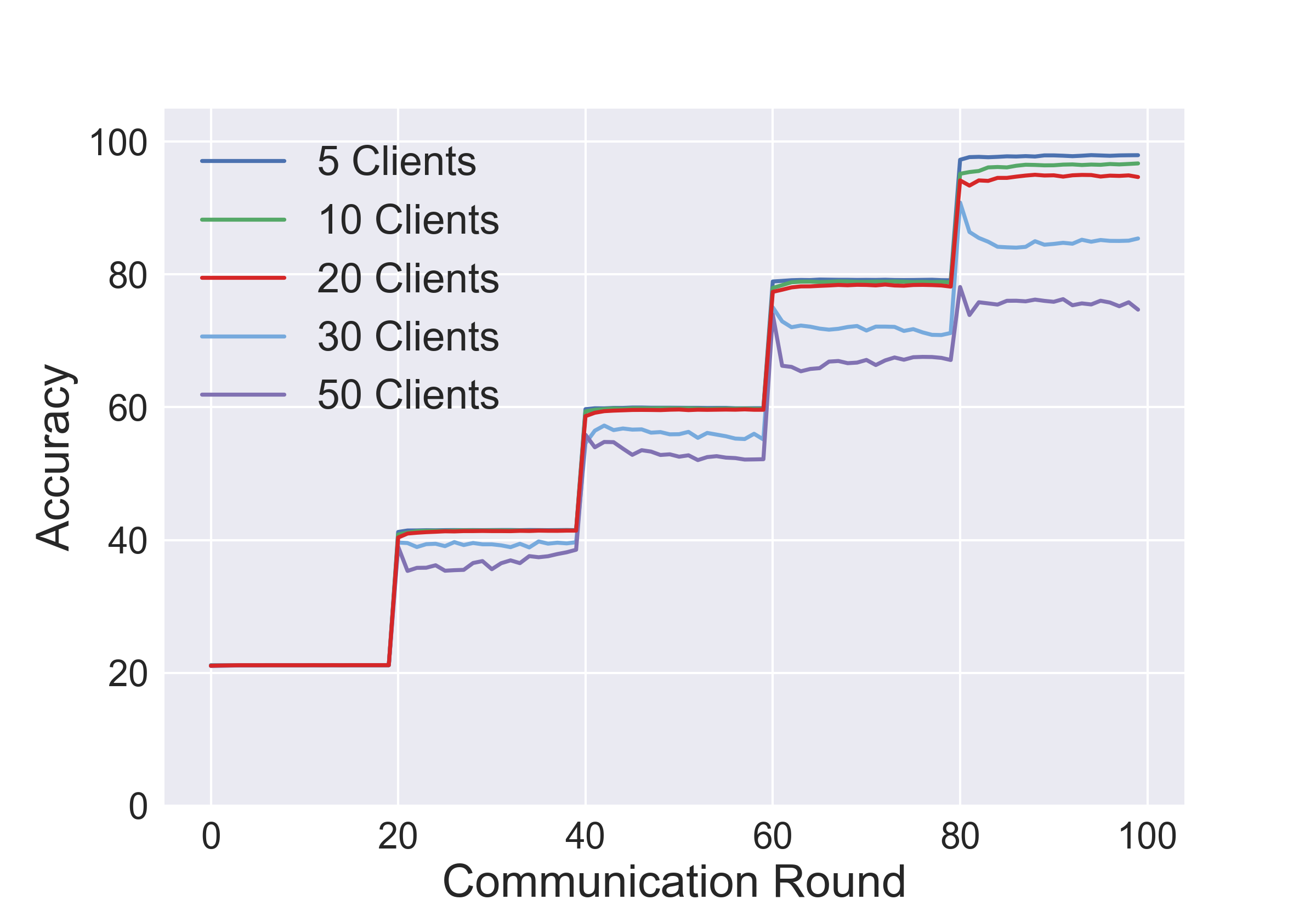} 
        \caption{Accuracy on the Global Test Set}
    \end{subfigure}
    \hfill
    \begin{subfigure}[h]{0.49\textwidth}
        \centering
        \includegraphics[width=\textwidth]{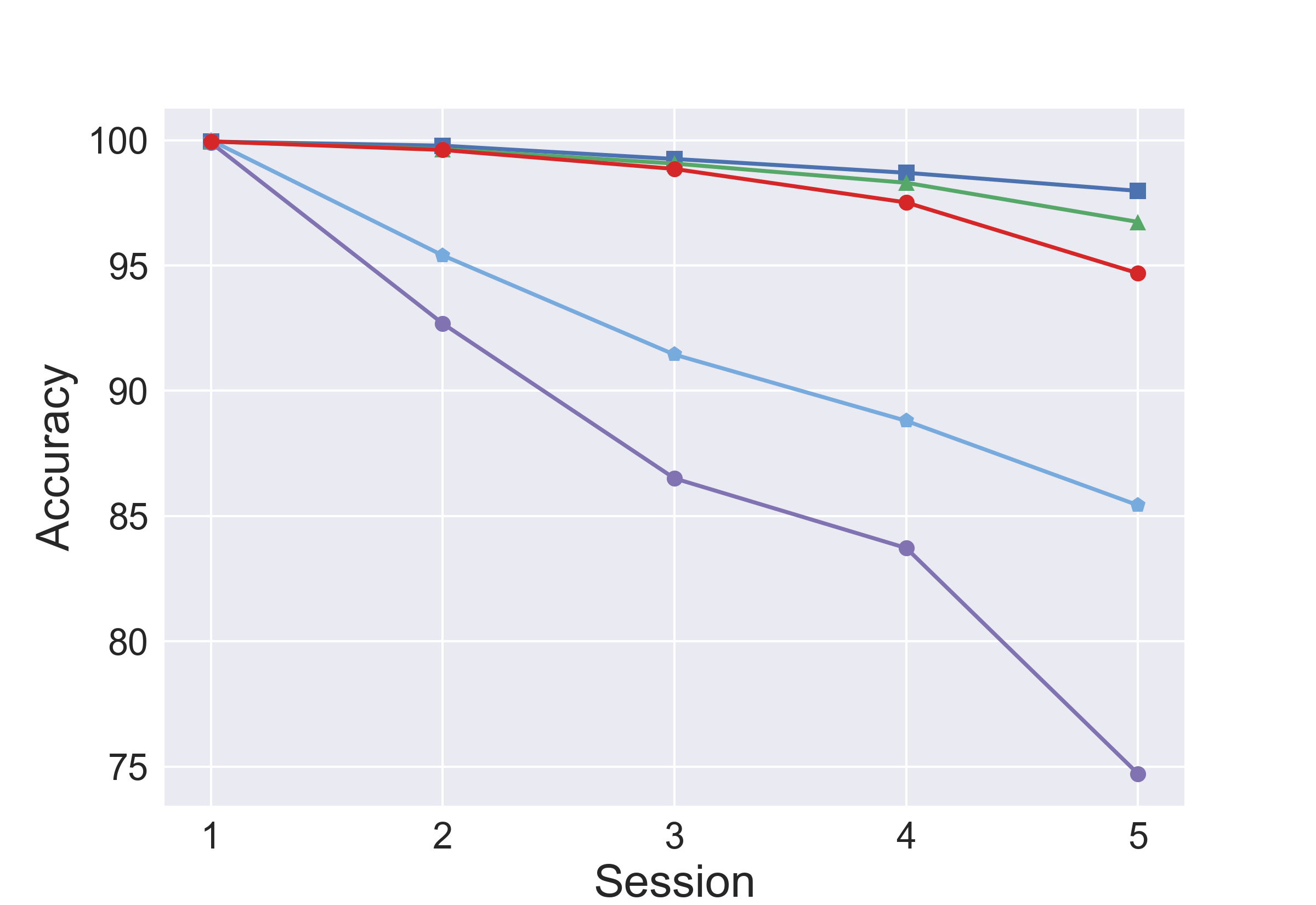}
        \caption{Accuracy Across Session Test Sets}
    \end{subfigure}
    \caption{\textbf{Sensitivity Study - Number of Clients.} Using the MNIST dataset in the Class-Incremental IID scenario as an example: 5 clients - 2400 samples per session per client, 10 clients - 1200 samples per session per client, 20 clients - 600 samples per session per client, 30 clients - 400 samples per session per client, 50 clients - 240 samples per session per client. The setup with 20 clients corresponds to the configuration in the main text.}
    \label{Fig. Sensitivity Study - NumClients}
\end{figure}

\subsection{Effect of Number of Synthetic Samples}
\label{appendix:Sensitivity NumSynSamples}

The number of synthetic samples generated needs to be balanced in real-world scenarios to maximize performance while minimizing computational overhead and memory usage. In the main text, we always set the number of synthetic samples equal to the number of real samples (i.e., $\delta=1$) to ensure a balance between previous and current knowledge during target model training. This varies across different datasets; for example, in the MNIST dataset, the number of synthetic samples is 600, while in the CIFAR-10 dataset, it is 500. Although the diffusion model can generate an unlimited number of synthetic samples, it is evident that when there are too many synthetic samples, the target model may struggle to learn effective current knowledge. We analyze the effect of different number of synthetic samples on accuracy, as illustrated in Figure \ref{Fig. Sensitivity Study - NumSynSamples}.

The results show that performance is optimal when the number of synthetic samples equals the number of real samples, i.e., with the ratio $\delta=1$. Fewer synthetic samples cannot provide sufficient training quality for the target model, while an excess of synthetic samples causes the model to overly focus on previous knowledge at the expense of current knowledge. Additionally, the diffusion model's training dataset comprises the real data from the current session and the synthetic data from previous sessions. Therefore, if there are too many synthetic samples from previous sessions, they will continue to influence the diffusion model's training in the current session, preventing the diffusion model from adequately learning the real data of the current session.

\begin{figure}[h]
    \centering
    \begin{subfigure}[h]{0.49\textwidth}
        \centering
        \includegraphics[width=\textwidth]{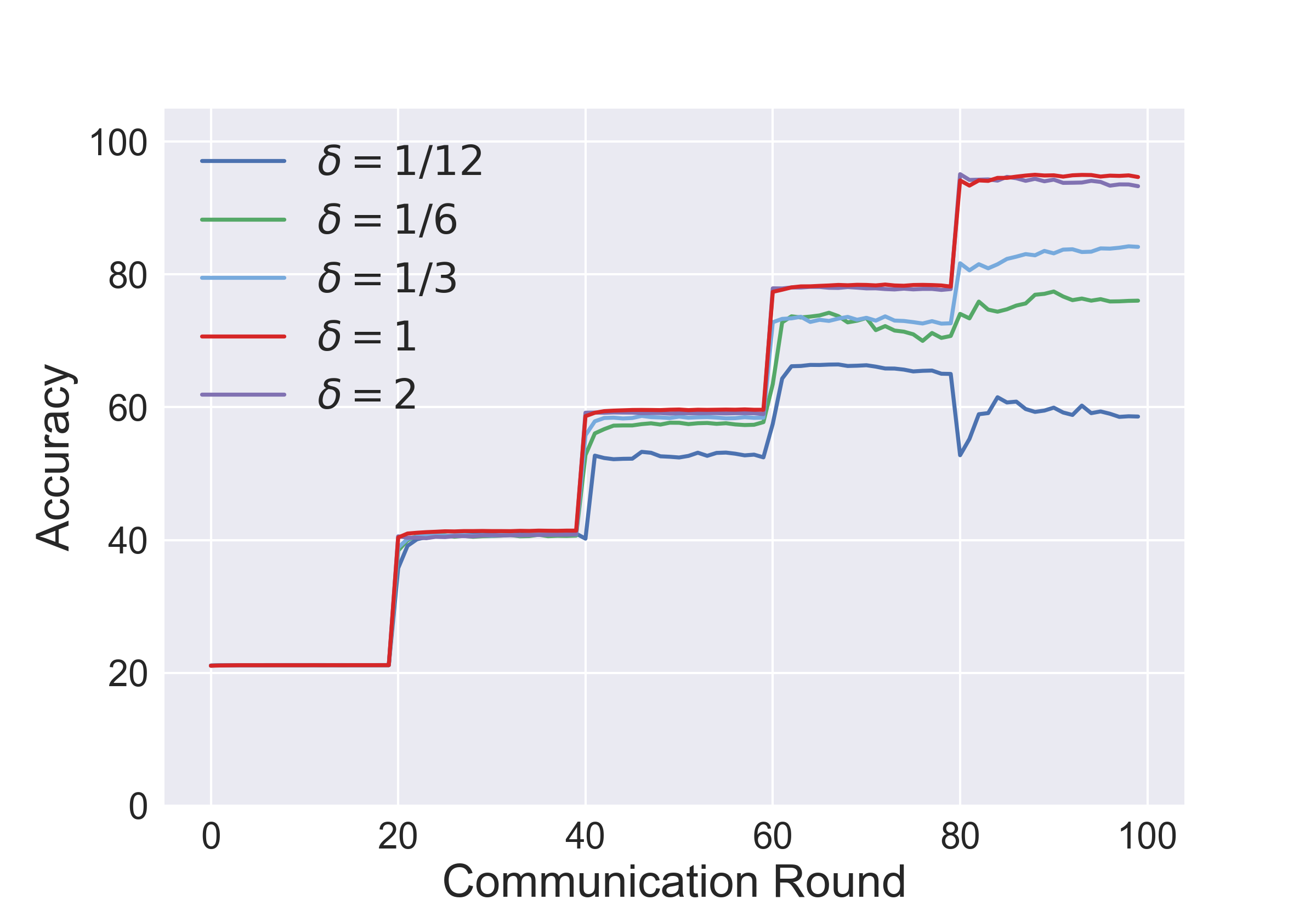} 
        \caption{Accuracy on the Global Test Set}
    \end{subfigure}
    \hfill
    \begin{subfigure}[h]{0.49\textwidth}
        \centering
        \includegraphics[width=\textwidth]{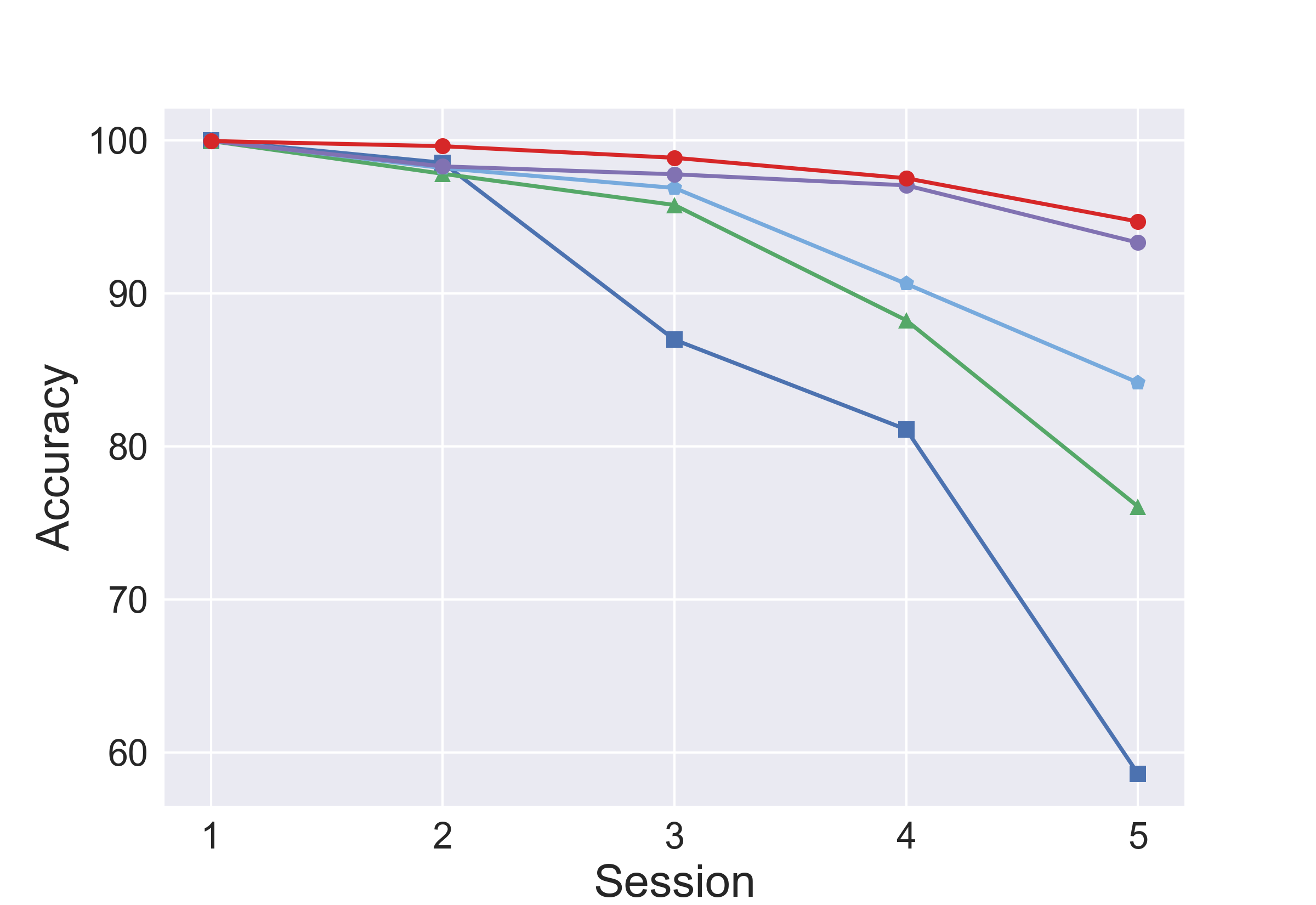}
        \caption{Accuracy Across Session Test Sets}
    \end{subfigure}
    \caption{\textbf{Sensitivity Study - Number of Synthetic Samples.} Using the MNIST dataset in the Class-Incremental IID scenario as an example. We maintain a setup of 20 clients and 600 real samples per session per client while varying the number of synthetic samples generated. The setup with $\delta=1$ corresponds to the configuration in the main text.}
    \label{Fig. Sensitivity Study - NumSynSamples}
\end{figure}

\newpage

\section{Broader Impacts}
\label{appendix:Broader Impacts}

\NAME is a CFL framework designed to address the issue of catastrophic forgetting in dynamic FL scenarios. It has a wide range of societal impacts, promoting applications and deployments across multiple fields, but it also carries potential risks. Besides the typical privacy and security concerns faced in FL, including a generative model also inherits potential issues associated with generative models. Below, we outline the positive impacts of \NAME, its potential risks, and mitigation strategies.

\textbf{Potential Positive Societal Impacts.}

\begin{itemize}[leftmargin=*]
    \item \textbf{Clients Operate in Dynamic Environment.} \NAME effectively supports multiple clients in ever-changing environments. The targets may change continuously, such as personnel, vehicles, buildings, etc., or the environment itself may vary, such as day and night, seasonal changes, different weather conditions, etc. For example, in an intelligent transportation scenario, a static roadside unit captures photos of vehicles in various weather conditions and needs to remember the features of previous weather conditions while continuing to learn in the current weather.
    \item \textbf{Clients Move Through Different Environments.} \NAME effectively supports multiple dynamic clients performing tasks in different environments because it prevents clients from forgetting knowledge from previous environments. This is crucial as the targets in these different environments are likely to be similar. For example, an unmanned aerial vehicle (UAV) patrolling different environments, from towns to highways to forests, may encounter similar manifestations of potential hazards like fires.
\end{itemize}

\textbf{Potential Negative Societal Impacts.}

\begin{itemize}[leftmargin=*]
    \item \textbf{Malicious Attacks.} Although clients in \NAME do not send the diffusion model to anyone, including the server or other clients, which minimizes the risk of privacy leakage, there is still a potential risk of replaying sensitive private data. If an attacker gains access to the diffusion model on a client, they could potentially replay all historical data. In contrast, if a client in vanilla FedAvg is compromised, only the current data are at risk. Given the broader temporal span and richness of the data that could be exposed, the former scenario is clearly more severe. Therefore, researchers should consider implementing privacy protection techniques for generative models when deploying \NAME to prevent attackers from extracting sensitive data from synthetic samples.
    \item \textbf{Misuse.} The diffusion model in \NAME also carries the risk of misuse. For example, some users might use sensitive data to train the diffusion model to circumvent regulatory scrutiny of stored data. Therefore, regulatory bodies need to implement stringent oversight and verification processes to ensure that generative models are not being used to bypass data compliance regulations.
\end{itemize}

\end{document}